\newtheorem{corollary}{Corollary}
\newtheorem{proposition}{Proposition}
\newtheorem{example}{Example}
\begin{document}

\let\WriteBookmarks\relax
\def\floatpagepagefraction{1}
\def\textpagefraction{.001}
\shorttitle{Kernel matrix spectrum estimation}
\shortauthors{M.~Lepilov}

\author[1]{Mikhail Lepilov}[type=editor]
\affiliation[1]{organization={Department of Mathematics, Emory University},
                city={Atlanta},
                state={Georgia},
                country={USA}}
\ead{mlepilov@gmail.com}

\fntext[fn1]{The research of Mikhail Lepilov was supported in part by NSF grant DMS-2038118.}

\title[mode=title]{Fast spectrum estimation of some kernel matrices}


\begin{abstract}
In data science, individual observations are often assumed to come independently from an underlying probability space. Kernel matrices formed from large sets of such observations arise frequently, for example during classification tasks. It is desirable to know the eigenvalue decay properties of these matrices without explicitly forming them, such as when determining if a low-rank approximation is feasible. In this work, we introduce a new eigenvalue quantile estimation framework for some kernel matrices. This framework gives meaningful bounds for all the eigenvalues of a kernel matrix while avoiding the cost of constructing the full matrix. The kernel matrices under consideration come from a kernel with quick decay away from the diagonal applied to uniformly-distributed sets of points in Euclidean space of any dimension. We prove the efficacy of this framework given certain bounds on the kernel function, and we provide empirical evidence for its accuracy. In the process, we also prove a general interlacing-type theorem for finite sets of numbers. Additionally, we indicate an application of this framework to the study of the intrinsic dimension of data, as well as several other directions in which to generalize this work.
\end{abstract}

\begin{keywords}
spectrum density estimation \sep kernel matrix \sep interlacing theorem \sep quantile bounds
\end{keywords}

\maketitle

\section{Introduction}
\label{sec:introduction}

\subsection*{Background} Kernel matrices that result from applying a positive-definite function pairwise to a finite set of points $X\subseteq\mathbb{R}^D$ arise in several areas of computational mathematics such as image processing and machine learning. In the latter field especially, common methods involve performing expensive computations with a kernel matrix, such as inverting it or finding its eigenvalues \cite{scholkopf,shawetaylor}. The kernel matrix involved, however, may be of a prohibitively large size to form, let alone to perform computations with. On the other hand, if the matrix has quick eigenvalue decay relative to its norm, then we may be able to efficiently carry out computations on its low-rank approximation instead. A good overview of such computations and their complexity is found in \cite{cesabianchi}. Hence, it is useful to study {\it a priori} the eigenvalue decay of a kernel matrix. Given the $n$ data points with which the kernel matrix is formed, we would like to find ways to estimate all of its eigenvalues faster than by having to form the matrix first. That is, we would like to do so in a number of operations that is at most subquadratic, and ideally sublinear, relative to $n$.

We consider a setting common in data science, which is when the points in $X$ are assumed to be independent and identically-distributed, coming from some latent distribution. In the past, the study of eigenvalue decay of such kernel matrices often focused on asymptotic eigenvalue behavior as the number of distribution samples in $X$ was taken to infinity, after making some appropriate assumptions on the distribution and kernel function involved \cite{kolgine,braun}. However, as the examples in \cite{braun} suggest, these bounds rely on the kernel function having its truncated eigendecomposition (in some appropriate function space) readily available. Furthermore, it is unclear exactly how many terms to keep when computing and truncating such an eigendecomposition in order to obtain an eigenvalue decay bound within some tolerance. Thus, it is impractical to use such ideas for our purposes of estimating eigenvalue decay of a given kernel matrix.

These difficulties are sidestepped when empirical methods, such as matrix sketching, are used to obtain bounds on eigenvalues. However, most sketching techniques typically require not only forming the kernel matrix but also finding matrix-vector products with sets of specially-crafted vectors. For some examples and an overview, see \cite{woodruff,swatwood}. Such techniques applied to an $n\times n$ matrix, therefore, would require a number of operations that scales at least quadratically in $n$, so most sketching approaches do not provide a way to achieve our goal. One exception is the class of techniques known as Nystr\"{o}m methods, which can be thought of as matrix sketching methods that do not require forming the entire kernel matrix. In Nystr\"{o}m methods, a random subsample of the points in $X$, and hence of the kernel matrix, is used to obtain a low-rank decomposition of the full matrix. The spectrum of this randomly-subsampled matrix is shown to be correlated pointwise with the first few eigenvalues of the full matrix \cite{willseeger}. Various strategies for sampling the matrix and obtaining theoretical pointwise accuracy guarantees for this correlation have been implemented over the years. Such guarantees depend on performing additional computations with the data points informing the choice of samples; see, for example, \cite{drinmahon}. An in-depth empirical exploration of such guarantees, and especially their limitations, is given in \cite{gitmahon}. However, since the goal of such methods is to find the best low-rank approximation, and not to find whether or not a good such approximation exists, these accuracy guarantees only apply to give eigenvalue estimates for the first few eigenvalues. Furthermore, in practice, the low-cost ``naive" Nystr\"{o}m method of \cite{willseeger} actually does not work to give a subsampled matrix with similar eigenvalues if the matrix has high numerical rank; see Figure~\ref{fig:nystrom} for an illustration of this phenomenon.

\begin{figure}[ptbh]
\centering
\begin{minipage}{0.5\textwidth}
\begin{tikzpicture}
  \node (img)  {\includegraphics[scale=0.22]{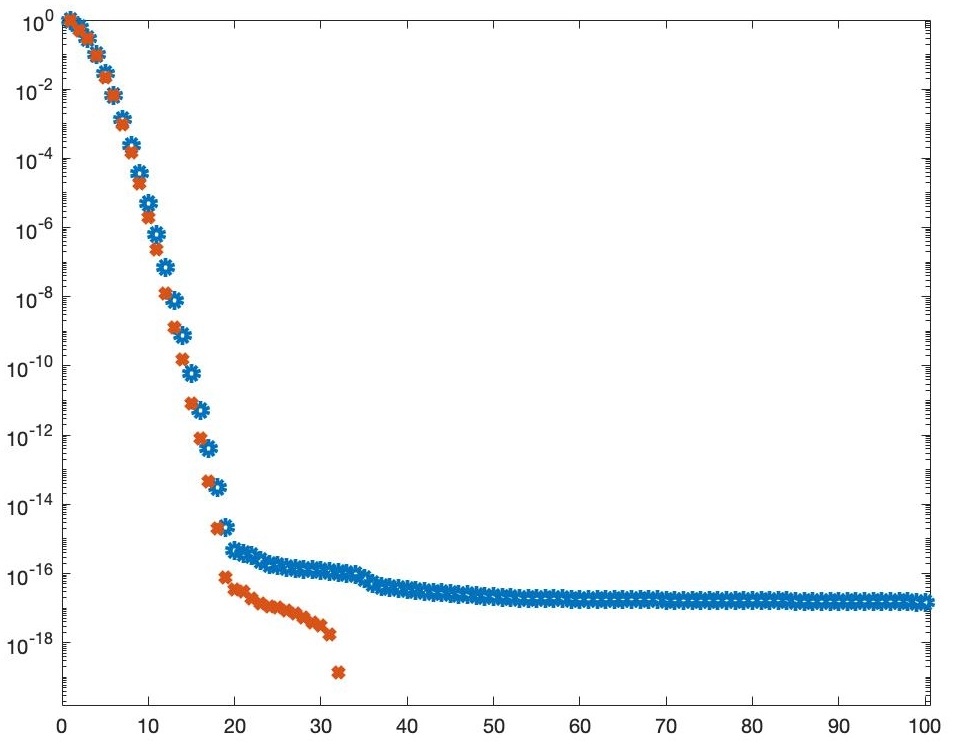}};
  \node[left=of img, node distance=0cm, rotate=90, anchor=center,yshift=-0.7cm] {Magnitude};
 \end{tikzpicture}
\end{minipage}\\

\begin{minipage}{0.5\textwidth}
\begin{tikzpicture}
  \node (img)  {\includegraphics[scale=0.22]{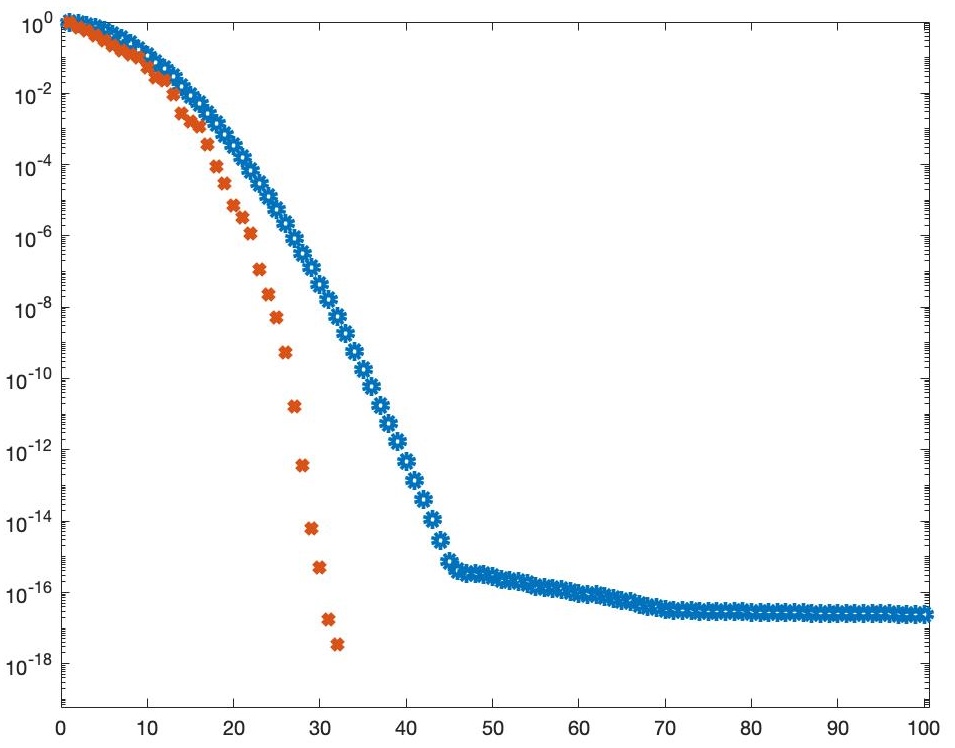}};
  \node[left=of img, node distance=0cm, rotate=90, anchor=center,yshift=-0.7cm] {Magnitude};
 \end{tikzpicture}
\end{minipage}\\

\begin{minipage}{0.5\textwidth}
\;\begin{tikzpicture}
  \node (img) {\includegraphics[scale=0.22]{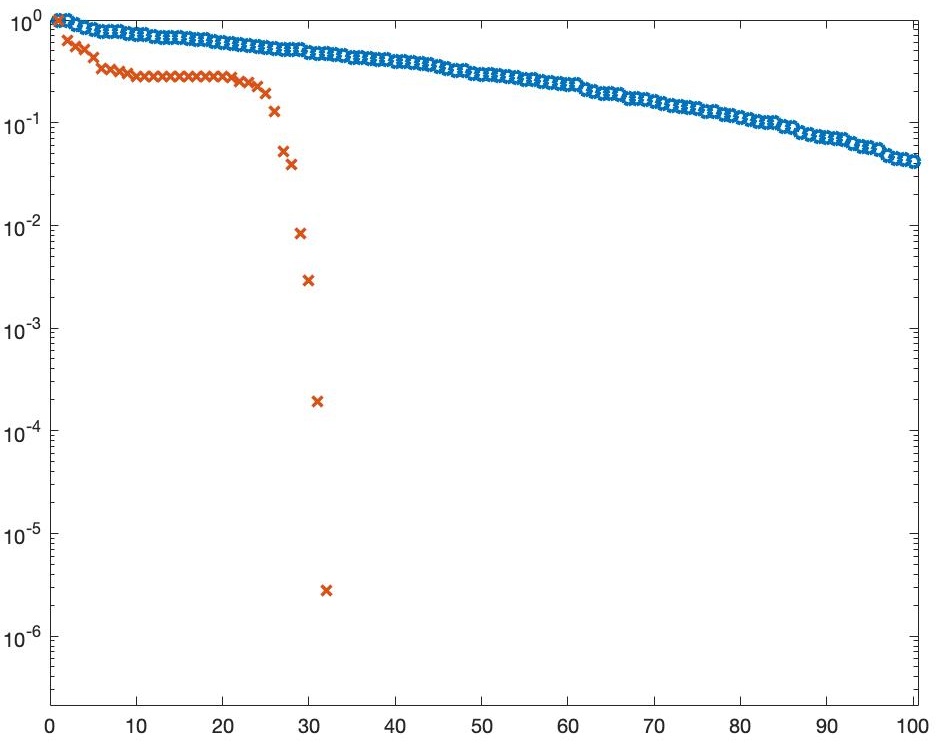}};
  \node[below=of img, node distance=0cm, yshift=1cm] {Index of eigenvalue};
  \node[left=of img, node distance=0cm, rotate=90, anchor=center,yshift=-0.7cm] {Magnitude};
 \end{tikzpicture}
\end{minipage}
\caption{The first 100 eigenvalues of the kernel matrix (blue circles) formed when $X$ consists of 512 points taken from the standard uniform distribution in one dimension, as well as those of its ``naive" Nystr\"{o}m approximation (red crosses) with 32 points. Here, the kernel used is $\kappa(x,y)=e^{-10(x-y)^2}$ (top figure), $\kappa(x,y)=e^{-100(x-y)^2}$ (middle figure), and $\kappa(x,y)=e^{-10000(x-y)^2}$ (bottom figure). It is evident that, in the top figure, the eigenvalue decay of the subsampled matrix corresponds well with the eigenvalue decay of the full matrix, but in the center and especially bottom figures, this is no longer the case. This indicates that the Nystr\"{o}m method only works to give an estimate of numerical rank if we know {\it a priori} that it is low for our given kernel matrix, as in the top figure.}
\label{fig:nystrom}
\end{figure}

Even more recently, related work comes from approximating graph spectra in subquadratic time, such as in \cite{bakshi,cohen-steiner}. In this approach, the kernel matrix can be regarded as the Laplacian of a particular weighted complete graph. Specifically, each vertex corresponds to one point, and each edge has weight equal to the kernel evaluated at the points corresponding to the vertices that the edge connects. Methods based on this are different from Nystr\"{o}m methods and instead give global distributional bounds in the Wasserstein-1 metric, often referred to as the ``earth-mover distance." From this, however, it is difficult to obtain pointwise estimates of the matrix spectrum. The reference \cite{bakshi} does contain such estimates for the first few eigenvalues but not for the later eigenvalues.

Perhaps the closest approach to the one we undertake may be found in \cite{bhattacharjee}. This is a result for general symmetric matrices that, in its basic form, gives additive bounds unrelated to the magnitude of each eigenvalue for the later eigenvalues. This makes controlling errors difficult for the later eigenvalues, and it prevents us from using the approach if the numerical rank of the matrix is not already low.

Hence, to obtain accurate pointwise estimates for all the eigenvalues of a given kernel matrix in subquadratic time, we must find a new empirical approach that avoids the issues of the methods above. To do so, we first note that all of the methods we mention so far use no more information than just the fact that the matrix is symmetric. Thus, using more information about the distribution underlying $X$, as well as the kernel involved in forming the matrix, may enable us to find a better approximation for its spectrum.

\subsection*{Our contribution}
In this work, we fix the kernel to a narrow class and the distribution underlying $X$ to the uniform distribution to design a proof-of-concept for a fundamentally new estimation technique for the eigenvalues of the resulting kernel matrix. The technique is based on finding bounds for the expected $k$ quantiles of the eigenvalues, for the case that $k\ll n$. This is done, in turn, by matching the moments of this eigenvalue distribution with that of a smaller, $k\times k$ matrix formed randomly from values of the original matrix. The matrix subsampling approach is conceptually similar to that of graph-theoretic spectrum estimation techniques of \cite{bhattacharjee}, and the moment-based distributional bounds are related to those used in Stochastic Lanczos Quadrature \cite{chen1}. Empirical evidence suggests that this technique works when the kernel in question has quick decay away from the diagonal, which corresponds to the case that the matrix is of high numerical rank. This complements the existing methods mentioned above, which do not give good accuracy guarantees in such cases (again, see Figure~\ref{fig:nystrom}).

Using an unoptimized eigenvalue algorithm on the smaller $k\times k$ matrix, this new framework requires at most $O(mk^3)$ computations, where $m$ is a constant that depends on the desired approximation accuracy. Although it is true the matrices under consideration may be approximated by banded or sparse matrices, this would require at least $n$ diagonal matrix values. Thus, for certain kernels used to compute a kernel matrix $A$, our new framework allows for the only method to our knowledge to find bounds on the later eigenvalues of the resulting kernel matrix that does not scale with $n$, after a preprocessing step that does not depend on the matrix or kernel. In addition, since this is an entirely new approach, it provides a natural set of questions for further study that could allow subquadratic eigenvalue estimates for wider classes of matrices. Along the way, we also show a general result concerning the interlacing of sets of real numbers, which is conceptually related to some results on Gaussian quadrature rules that have recently been studied in the context of spectrum density estimation. Finally, we propose an application of this work to the problem of finding the so-called intrinsic dimension of a dataset.

The rest of the paper is structured as follows: in Section~\ref{sec:background}, we detail our approach. In the process, we prove several new results that show its efficacy in kernel matrix eigenvalue quantile estimation. Among these results is the aforementioned new, general interlacing result concerning finite sets of real numbers. In Section~\ref{sec:numerical}, we give some numerical experiments showing the strengths and limitations of our new framework. Finally, in Section~\ref{sec:futurework}, we pose a number of questions for further study that could improve the framework. We also suggest an application to the problem of dimension reduction in data science.

Throughout the paper, we use the following notation. Let $D,n\in\mathbb{N}$, and let $X\subseteq\mathbb{R}^D$ with $|X|=n$. Let $\kappa:\mathbb{R}^D\times\mathbb{R}^D\to\mathbb{R}$ be a symmetric, positive-definite function. Fix an indexing $X=\{\mathbf{x_1},\ldots,\mathbf{x_n}\}$. By $\kappa(X,X)$, we mean the kernel matrix $A\in\mathbb{R}^{n\times n}$ with entries $A_{ij}=\kappa(\mathbf{x}_i,\mathbf{x}_j)$. For a symmetrix matrix $A\in\mathbb{R}^{n\times n}$ and some $1\leq j\leq n$, we denote by $\sigma_j(A)$ the $j$th largest eigenvalue of $A$. Finally, for $a,b\in\mathbb{R}$ with $a\leq b$, we denote by $U[a,b]^D$ the uniform distribution on the cube $[a,b]^D$.

\section{Theoretical results}
\label{sec:background}
Fix $X$ and $A$ as above. We will assume throughout the paper that each $\mathbf{x}_i\sim U[0,1]^D$, but we will comment later on how we may relax this assumption to obtain more general analogs of our main ideas. We concern ourselves with finding bounds for the eigenvalues of $A$.

We do so by finding another kernel matrix $B\in\mathbb{R}^{k\times k}$, for $k\ll n$, formed using $k$ points sampled from among the $\mathbf{x}_i$s. We wish for the $k$ eigenvalues of $B$ to then give bounds for the $k$ quantiles of the eigenvalue distribution of $A$ in the following way. Without loss of generality, we may assume $k|n$. We wish for $B$ to have the property that
\begin{equation}
\label{eq:interlacing}
\sigma_{\lceil\frac{jk}{n}\rceil-1}(B)\geq\sigma_j(A)\geq\sigma_{\lceil\frac{jk}{n}\rceil+1}(B),
\end{equation}
for $1\leq j\leq n$, where we define ``$\sigma_0(B)=\infty$" and ``$\sigma_{k+1}(B)=0$." In other words, we wish for each $n/k$ consecutive eigenvalues, ordered  of $A$ to be ``sandwiched" between two of the $k$ eigenvalues of $B$, which we may compute in at worst $O(k^3)$. We may look ahead to Figure~\ref{fig:momentinterlace} for a picture of this, but we first state our motivation. The reason we wish to find another matrix $B$ using a subsample of the original points, heuristically, is to preserve information about the geometry of the distribution that gives rise to the $\mathbf{x}_i$s. An implicit assumption is that $n$ is so large compared to $k$ that picking $k$ of the $\mathbf{x}_i$s (i.e. sampling from entries of $A$) is the same as drawing from the original distribution.

\subsection{Interlacing property of sets of real numbers} We may expect to get something like the bounds in \eqref{eq:interlacing} if we match each of the $k$ moments of the empirical spectral distributions of $A$ and $B$, which are defined as the discrete uniform distributions on $\mathcal{A}=\{\sigma_1(A),\ldots,\sigma_n(A)\}$ and $\mathcal{B}=\{\sigma_1(B),\ldots,\sigma_k(B)\}$, respectively. This is because of the usual notion that the moments of a distribution convey its shape. In the case of the discrete uniform distribution on $\mathcal{B}$, we know that all shape information is contained entirely in its first $k$ moments, since $\mathcal{B}$ contains only $k$ points. Hence, we may informally think of matching each of the $k$ moments of $\mathcal{A}$ and $\mathcal{B}$ as the best we can do in terms of estimating quantiles.

A moment-matching framework to gain information about the empirical spectral density of a symmetric positive definite (SPD) matrix was recently outlined in the context of the Lanczos algorithm and related methods in the review of Lin, Saad, and Yang \cite{lin}. This approach is generally referred to as Stochastic Lanczos Quadrature (SLQ). In brief, the approach consists of performing a specified number of iterations of Lanczos on a random starting vector, and then using the resulting Ritz values with certain weights to obtain spectrum information. The analysis and properties of SLQ, as well as those of the related kernel polynomial method, was extensively studied by Chen, Trogdon, and Ubaru in \cite{chen1} and \cite{chen2}, in which resulting global error bounds on the estimated spectral density are obtained.

Our approach also relies on matching moments, but it distinct from the above in that it provides ``unweighted" spectral density bounds, which allows us to extract quantile information. Furthermore, our approach relies on graph-theoretic matrix subsampling methods in the spirit of \cite{bhattacharjee}, rather than matrix multiplication method via the Lanczos algorithm, to gain information about the spectrum. This is made possible by our additional assumptions on how $A$ is formed, namely by using a fast-decaying kernel applied to uniformly-distributed random variables.

We start by stating proposition below, which is a general property of sets of real numbers. Note that, for convenience of notation, we assume henceforth that all the eigenvalues of $A$ are distinct and all the eigenvalues of $B$ are distinct. In practice, this assumption holds if the underlying distribution of $X$ is continuous and the kernel is strictly decreasing away from the diagonal. However, the following proposition and corollary can be easily modified to hold even in the case of repeated eigenvalues.

\begin{proposition}
\label{prop:momentmatching}
Let $S,T\subseteq\mathbb{R}_{\geq0}$ with $|S|=n$, $|T|=k$, and $k|n$. Denote by $s_i$ and $t_j$ the $i$th and $j$th largest elements of $S$ and $T$, respectively, and suppose $\sum_{i=1}^n\frac{s_i^r}{n}=\sum_{i=1}^k\frac{t_i^r}{k}$ for all $r=1,\ldots,k$. Then
\begin{equation*}
t_{\lceil\frac{jk}{n}\rceil-1}\leq s_j\leq t_{\lceil\frac{jk}{n}\rceil+1}
\end{equation*}
for all $j=1,\ldots,n$, where we define $t_0=0$ and $t_{k+1}=\infty$. (See Figure~\ref{fig:momentinterlace} for an illustration of this.)
\end{proposition}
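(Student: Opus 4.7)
The plan is to introduce the signed measure $\nu = \mu_S - \mu_T$, where $\mu_S = \frac{1}{n}\sum_i \delta_{a_i}$ and $\mu_T = \frac{1}{k}\sum_j \delta_{b_j}$ are the empirical probability measures associated with $S$ and $T$, and to study its cumulative distribution function $F_\nu(t) = F_S(t) - F_T(t)$. Since both measures are probability measures and their first $k$ moments agree, one has $\int p\, d\nu = 0$ for every polynomial $p$ of degree at most $k$.

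First I would reformulate the conclusion in terms of $F_\nu$. Unpacking the two halves of $b_{\lceil jk/n\rceil - 1} \leq a_j \leq b_{\lceil jk/n\rceil + 1}$ as statements about the counts $|\{i:a_i<b_m\}|$ and $|\{i:a_i\leq b_m\}|$, and using that $k$ divides $n$ and that eigenvalues are distinct, the claim becomes equivalent to
\[
F_S(b_m) \in \bigl[(m-1)/k,\, m/k\bigr]\quad\text{for each }m=1,\ldots,k,
\]
that is, $F_\nu(b_m^-) \in [0,1/k]$ (equivalently $F_\nu(b_m) \in [-1/k,0]$). So the task reduces to a pointwise bound on $F_\nu$ at the $k$ nodes $b_m$.

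Next, I would integrate by parts: since $F_\nu$ has compact support and vanishes outside, $\int p\,d\nu=0$ for $p\in\mathcal{P}_{\leq k}$ translates into
\[
\int_\mathbb{R} F_\nu(t)\,q(t)\,dt = 0 \quad\text{for every }q\in\mathcal{P}_{\leq k-1}.
\]
A standard Chebyshev-system argument then yields that $F_\nu$ must change sign at least $k$ times on $\mathbb{R}$: if it changed sign only at $s_1<\cdots<s_m$ with $m<k$, then $\pm\prod_i(t-s_i)$ would be a polynomial of degree $<k$ whose integral against $F_\nu$ is strictly positive, a contradiction.

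The heart of the argument is then a combinatorial analysis of where those sign changes can sit. The function $F_\nu$ is a step function with upward jumps of size $1/n$ at each $a_i$ and downward jumps of size $1/k$ at each $b_j$, so it is nondecreasing on each of the intervals $(-\infty,b_1)$, $(b_j,b_{j+1})$, and $(b_k,\infty)$. Combined with $F_\nu(-\infty)=F_\nu(\infty)=0$, this forces $F_\nu\geq 0$ on $(-\infty,b_1)$ and $F_\nu\leq 0$ on $(b_k,\infty)$, so no sign change can occur in these outer intervals; in each inner interval $(b_j,b_{j+1})$ monotonicity allows at most one $-\!\to\!+$ sign change; and every $+\!\to\!-$ sign change must occur at some $b_j$. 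Matching these upper bounds ($k$ possible $+\!\to\!-$ spots and $k-1$ possible $-\!\to\!+$ spots) against the lower bound of $k$ sign changes, and alternating signs, forces each $b_m$ to carry (up to boundary cases) a $+\!\to\!-$ transition, after which the downward jump size $1/k$ at $b_m$ pins $F_\nu(b_m^-)$ into the interval $[0,1/k]$ and $F_\nu(b_m)$ into $[-1/k,0]$.

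The main obstacle is this last combinatorial step: the sign-change lower bound of $k$ does not by itself pin down the precise distribution of sign changes among the $b_j$'s and the intervening intervals, nor their magnitude. I expect the cleanest way to promote the count of $k$ into a sign change (or a boundary event) at every single $b_m$ is to work with the integer-valued quantity $nF_\nu$, which only takes values in $\{-n,\ldots,n\}$ with jumps in $\{+1,-n/k\}$, and to chase the alternation of signs from left to right, showing that any ``missed'' $b_m$ (where $F_\nu(b_m^-)\notin[0,1/k]$) would cost at least one sign change somewhere else and push the total below $k$. Once this is established, the boundary cases $F_\nu(b_m^-)\in\{0,1/k\}$ (which correspond to $F_\nu$ touching zero without strictly crossing it) will give exactly the non-strict inequalities in the statement.
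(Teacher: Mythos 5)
Your approach is genuinely different from the paper's. The paper's proof constructs the orthogonal polynomials $P_0,\ldots,P_k$ associated to the moment sequence of $S$ (following Akhiezer) and the empirical Christoffel function of Gavriliadis--Athanassoulis, shows that the $b_i$'s are the roots of $P_k$ and that the Christoffel function equals $1/k$ at each $b_i$, and then invokes the known CDF bounds $1-\sum_{j\geq i}\lambda(x_j)\leq F_S(x_i)\leq\sum_{j\leq i}\lambda(x_j)$. Your route instead works directly with the signed CDF $F_\nu=F_S-F_T$, derives $\int qF_\nu=0$ for $\deg q\leq k-1$ by integration by parts, and counts sign changes via a Chebyshev-system argument. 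The setup (translation of the conclusion to $F_\nu(b_m^-)\in[0,1/k]$, the orthogonality, and the $\geq k$ sign-change lower bound) is all correct and nicely elementary.

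However, the combinatorial step you flag as the main obstacle is a genuine gap, and I do not think the sign-change count you have can be ``chased'' into the conclusion without a fundamentally new ingredient. Write $c_m=F_\nu(b_m^-)$ and $d_m=c_m-1/k$; a $+\to-$ change can occur only at a $b_m$ (exactly when $0<c_m<1/k$), and a $-\to+$ change only inside an interval $(b_m,b_{m+1})$ (exactly when $c_m<1/k<$ and $c_{m+1}>0$). But having $\geq k$ such alternating changes, together with the monotone/jump structure and the boundary conditions $c_1\geq0$, $c_k\leq1/k$, does \emph{not} force $c_m\in[0,1/k]$ for every $m$. For instance with $k=3$, $n=30$, the values $c_1=1/30$, $c_2=15/30$, $c_3=9/30$ give $d_1=-9/30$, $d_2=5/30$, $d_3=-1/30$, hence the sign sequence $0,+,-,+,+,+,-,0$ with exactly $k=3$ sign changes (at $b_1$, inside $(b_1,b_2)$, at $b_3$), yet $c_2=1/2>1/k$. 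So a configuration passing every structural test you have listed still violates the conclusion; the desired pinning-down must come from the actual moment equalities (and possibly the sizes of the jumps) rather than from the sign-change count alone. To close the gap you would need either to extract more than ``at least $k$ sign changes'' from the orthogonality $\int qF_\nu=0$, or to feed the exact moment relations back in, at which point you would be close to reconstructing the orthogonal-polynomial/Christoffel argument the paper uses. It is also worth flagging that the paper's own proof passes from ``$\mu_i=\nu_i$ for $i\leq k$'' to ``$\mu_i=\nu_i$ for all $i$'' in one line, which is the step that lets it evaluate $P_k$ and the Christoffel weights; any rigorous completion of either proof needs to confront the fact that only $k$ moments are assumed to match.
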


\begin{figure}[ptbh]
\centering
\begin{tikzpicture}
  \node (img)  {\includegraphics[width=0.7\textwidth]{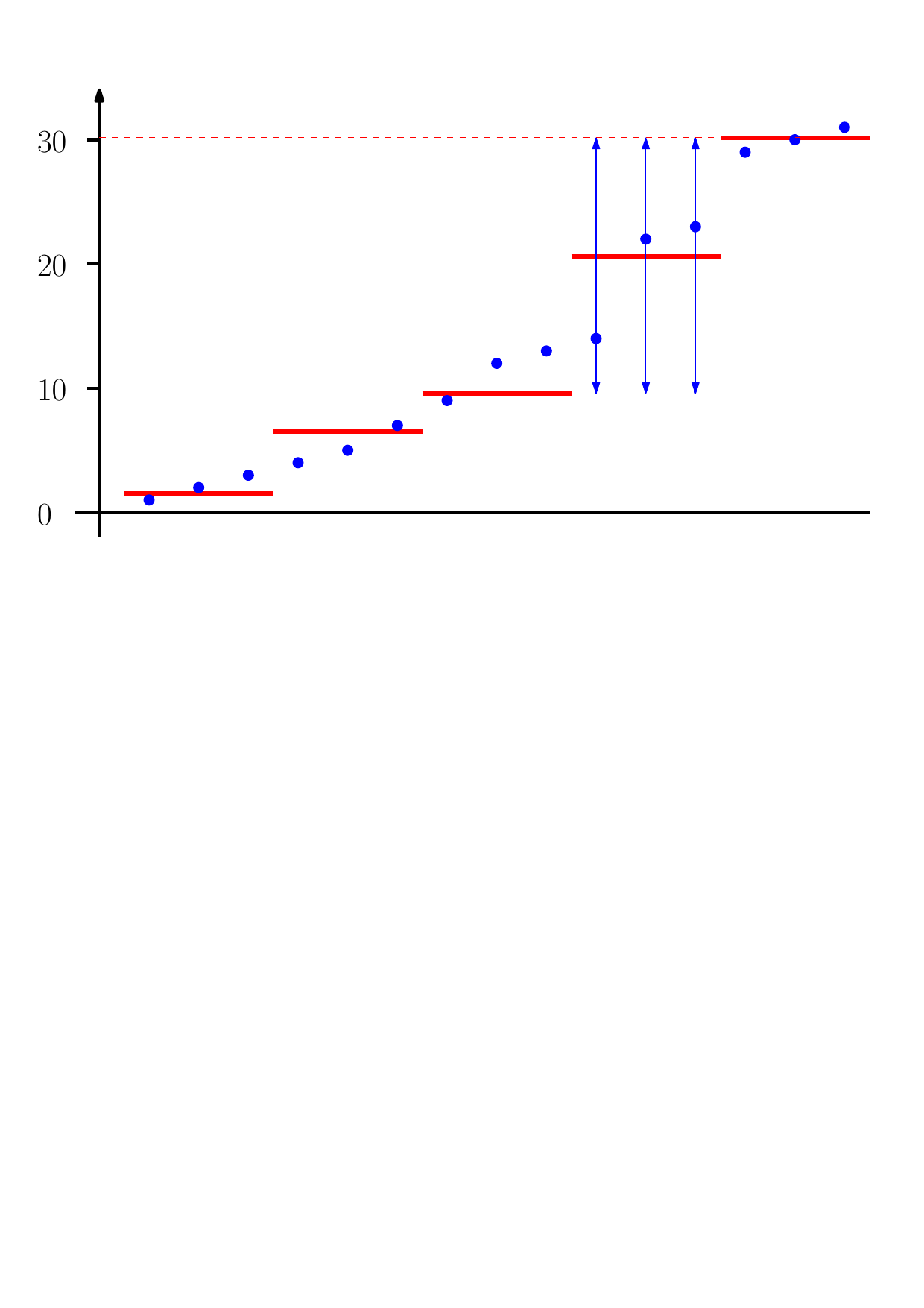}};
  \node[below=of img, node distance=0cm, yshift=1cm] {Index of element of $S$, $T$ (sorted by magnitude)};
  \node[left=of img, node distance=0cm, rotate=90, anchor=center,yshift=-0.7cm] {Magnitude};
 \end{tikzpicture}
\caption{The sets $S=\{1,2,3,4,5,7,9,12,13,14,22,23,29,30,31\}$ (blue dots) and $T$ (solid red dashes), where $T$ is picked such that $\sum_{i=1}^{15}s_i^r/15=\sum_{i=1}^5t_i^r/5$ for $r=1,\ldots,5$. Hence, $T$ is approximately $\{1.51216,6.52312,9.54601,20.5897,30.1624\}$. Proposition~\ref{prop:momentmatching} shows, for example, that $t_3\leq s_{10},s_{11},s_{12}\leq t_5$. This is illustrated with the blue arrows above.}
\label{fig:momentinterlace}
\end{figure}

\begin{proof}
Consider the discrete uniform probability distributions on $S$ and $T$, with the former having cumulative distribution function $F_S$. Then denoting by $\mu_i$ and $\nu_i$ the $i$th moments of these distributions on $S$ and $T$ for $i=0,\ldots,k$, respectively, our assumptions are equivalent to requiring that $\mu_i=\nu_i$ for each $i=1,\ldots,k$. The statement follows as a quick corollary to some classical results on the bounds for $F_S$ in terms of its moments, which we reproduce here.

Following the notation and presentation of \cite{momentbook}---in particular, note the relationships in Equations~1.3 and 1.4 of Chapter 1---we construct the set of polynomials $P_0,\ldots,P_k$ by the explicit formulas $P_0=1$ and
\begin{equation*}
P_j(x)=\frac{1}{\sqrt{D_{j-1}D_j}}\begin{vmatrix}\mu_0 & \mu_1 & \cdots & \mu_j\\
\mu_1 & \mu_2 & \cdots & \mu_{j+1}\\
\vdots & \vdots & \ddots & \vdots\\
\mu_{j-1} & \mu_j & \cdots & \mu_{2j-1}\\
1 & x & \cdots & x^j\end{vmatrix}
\end{equation*}
for $j=1,\ldots,k$, where
\begin{equation*}
D_j=\begin{vmatrix}\mu_0 & \mu_1 & \cdots & \mu_j\\
\mu_1 & \mu_2 & \cdots & \mu_{j+1}\\
\vdots & \vdots & \ddots & \vdots\\
\mu_j & \mu_{j+1} & \cdots & \mu_{2j}\end{vmatrix}
\end{equation*}
for $j=0,\ldots,k$. These polynomials satisfy a number of properties, but here we note only the following: if we write the product $P_iP_j$ as $(P_iP_j)(x)=\sum_{l=0}^{\deg(P_i)\deg(P_j)}c_{i,j,l}x^l$ for some coefficients $c_{i,j,l}$, then
\begin{equation*}
\sum_{l=0}^{\deg(P_i)\deg(P_j)}c_{i,j,l}\mu_l=\delta_{i,j}.
\end{equation*}
Since $\nu_i=\mu_i$ for all $i$ and because $T$ has the discrete uniform distribution, this is equivalent to
\begin{align}
\sum_{l=1}^k\frac{(P_iP_j)({t_l})}{k}&=\delta_{i,j},\mathrm{\;or}\nonumber\\
\sum_{l=1}^k(P_iP_j)(t_l)&=k\delta_{i,j}\label{eq:kroneckerprop},
\end{align}
where $\delta_{i,j}$ is the Kronecker delta. (In other words, in our case, $(P_i)_{i=0,\ldots,k}$ is a sequence of polynomials orthogonal with respect to the average of the evaluation functionals at the $t_j$s for $j=1,\ldots,k$.) Furthermore, following \cite{momentpaper}, we construct the ``empirical Christoffel function"
\begin{equation*}
\lambda_k=\frac{1}{\sum_{i=0}^kP_i^2}
\end{equation*}
Now, let $x_i$ for $i=1,\ldots,k$ be the roots of $P_k$. Using the function $\lambda$, the authors in \cite{momentpaper} note the following bounds on $F_S$:
\begin{equation*}
1-\sum_{j=i}^k\lambda(x_j)\leq F_S(x_i)\leq\sum_{j=1}^i\lambda(x_j).
\end{equation*}
By our definition of $F_S$, the proposition therefore follows if we show that (i) the $t_i$'s are precisely the roots $x_i$ of $P_k$, and (ii) $\lambda(x_i)=1/k$ for each $i=1,\ldots,k$. To see (i), we note that since $\mu_i=\nu_i$ for all $i\in\mathbb{N}$, for each $t_i$ we have
\begin{align*}
P_k(t_i)&=\frac{1}{\sqrt{D_{k-1}D_k}}\begin{vmatrix}\left(\frac{1}{k}\right)\sum_{l=1}^k1 & \left(\frac{1}{k}\right)\sum_{l=1}^kt_l & \cdots & \left(\frac{1}{k}\right)\sum_{l=1}^kt_l^k\\
\left(\frac{1}{k}\right)\sum_{l=1}^kt_l & \left(\frac{1}{k}\right)\sum_{l=1}^kt_l^2 & \cdots & \left(\frac{1}{k}\right)\sum_{l=1}^kt_l^{k+1}\\
\vdots & \vdots & \ddots & \vdots\\
\left(\frac{1}{k}\right)\sum_{l=1}^kt_l^{k-1} & \left(\frac{1}{k}\right)\sum_{l=1}^kt_l^k & \cdots & \left(\frac{1}{k}\right)\sum_{l=1}^kt_l^{2k-1}\\
1 & t_i & \cdots & t_i^k\end{vmatrix}\\
&\scalebox{0.85}{$=\frac{1}{\sqrt{D_{k-1}D_k}}
\begin{vmatrix}\begin{bmatrix}\left(\frac{1}{k}\right) & \left(\frac{1}{k}\right) & \cdots & \left(\frac{1}{k}\right) & \cdots & \left(\frac{1}{k}\right)\\
\left(\frac{1}{k}\right)t_1 & \left(\frac{1}{k}\right)t_2 & \cdots & \left(\frac{1}{k}\right)t_i & \cdots & \left(\frac{1}{k}\right)t_k\\
\vdots & \vdots & \vdots & \vdots & \ddots & \vdots\\
\left(\frac{1}{k}\right)t_1^{k-1} & \left(\frac{1}{k}\right)t_2^{k-1} & \cdots & \left(\frac{1}{k}\right)t_i^{k-1} & \cdots & \left(\frac{1}{k}\right)t_k^{k-1}\\
0 & 0 & \cdots & 1 & \cdots & 0\end{bmatrix}\begin{bmatrix}1 & t_1 & t_1^2 & \cdots & t_1^k\\
1 & t_2 & t_2^2 & \cdots & t_2^k\\
\vdots & \vdots & \cdots & \ddots & \vdots\\
1 & t_k & t_k^2 & \cdots & t_k^k\end{bmatrix}\end{vmatrix}$}\\
&=0.
\end{align*}

Now, note that by fact (i), we see that (ii) is equivalent to the condition that $\sum_{i=0}^kP_i^2(t_i)=k$ for each $i=1,\ldots,k$. Define the matrix $C$ by
\begin{equation*}
C_{j,m}=\sum_{i=0}^{k-1}
\scalebox{0.85}{$\tiny
\setlength{\arraycolsep}{.5pt}
\medmuskip = 1mu
\frac{\begin{vmatrix}\begin{bmatrix}\left(\frac{1}{k}\right)\sum_{l=1}^k1 & \left(\frac{1}{k}\right)\sum_{l=1}^kt_l & \cdots & \left(\frac{1}{k}\right)\sum_{l=1}^kt_l^i\\
\left(\frac{1}{k}\right)\sum_{l=1}^kt_l & \left(\frac{1}{k}\right)\sum_{l=1}^kt_l^2 & \cdots & \left(\frac{1}{k}\right)\sum_{l=1}^kt_l^{i+1}\\
\vdots & \vdots & \ddots & \vdots\\
\left(\frac{1}{k}\right)\sum_{l=1}^kt_l^{i-1} & \left(\frac{1}{k}\right)\sum_{l=1}^kt_l^i & \cdots & \left(\frac{1}{k}\right)\sum_{l=1}^kt_l^{2i-1}\\
1 & t_j & \cdots & t_j^i\end{bmatrix}\begin{bmatrix}\left(\frac{1}{k}\right)\sum_{l=1}^k1 & \left(\frac{1}{k}\right)\sum_{l=1}^kt_l & \cdots & \left(\frac{1}{k}\right)\sum_{l=1}^kt_l^i\\
\left(\frac{1}{k}\right)\sum_{l=1}^kt_l & \left(\frac{1}{k}\right)\sum_{l=1}^kt_l^2 & \cdots & \left(\frac{1}{k}\right)\sum_{l=1}^kt_l^{i+1}\\
\vdots & \vdots & \ddots & \vdots\\
\left(\frac{1}{k}\right)\sum_{l=1}^kt_l^{i-1} & \left(\frac{1}{k}\right)\sum_{l=1}^kt_l^i & \cdots & \left(\frac{1}{k}\right)\sum_{l=1}^kt_l^{2i-1}\\
1 & t_m & \cdots & t_m^i\end{bmatrix}\end{vmatrix}}
{\begin{vmatrix}\left(\frac{1}{k}\right)\sum_{l=1}^k1 & \left(\frac{1}{k}\right)\sum_{l=1}^kt_l & \cdots & \left(\frac{1}{k}\right)\sum_{l=1}^kt_l^i\\
\left(\frac{1}{k}\right)\sum_{l=1}^kt_l & \left(\frac{1}{k}\right)\sum_{l=1}^kt_l^2 & \cdots & \left(\frac{1}{k}\right)\sum_{l=1}^kt_l^{i+1}\\
\vdots & \vdots & \ddots & \vdots\\
\left(\frac{1}{k}\right)\sum_{l=1}^kt_l^i & \left(\frac{1}{k}\right)\sum_{l=1}^kt_l^{i+1} & \cdots & \left(\frac{1}{k}\right)\sum_{l=1}^kt_l^{2i}\end{vmatrix}
\begin{vmatrix}\left(\frac{1}{k}\right)\sum_{l=1}^k1 & \left(\frac{1}{k}\right)\sum_{l=1}^kt_l & \cdots & \left(\frac{1}{k}\right)\sum_{l=1}^kt_l^{i-1}\\
\vdots & \vdots & \ddots & \vdots\\
\left(\frac{1}{k}\right)\sum_{l=1}^kt_l^{i-1} & \left(\frac{1}{k}\right)\sum_{l=1}^kt_l^i & \cdots & \left(\frac{1}{k}\right)\sum_{l=1}^kt_l^{2i-2}\end{vmatrix}}$};
\end{equation*}
then (ii) follows once we show that $C_{j,j}=k$ for $j=1,\ldots,k$. To see this, we note that $C=A^TA$, where
\begin{equation*}
A_{j,m}=
\scalebox{0.85}{$\tiny
\setlength{\arraycolsep}{.5pt}
\medmuskip = 1mu
\frac{\begin{vmatrix}\left(\frac{1}{k}\right)\sum_{l=1}^k1 & \left(\frac{1}{k}\right)\sum_{l=1}^kt_l & \cdots & \left(\frac{1}{k}\right)\sum_{l=1}^kt_l^j\\
\left(\frac{1}{k}\right)\sum_{l=1}^kt_l & \left(\frac{1}{k}\right)\sum_{l=1}^kt_l^2 & \cdots & \left(\frac{1}{k}\right)\sum_{l=1}^kt_l^{j+1}\\
\vdots & \vdots & \ddots & \vdots\\
\left(\frac{1}{k}\right)\sum_{l=1}^kt_l^{j-1} & \left(\frac{1}{k}\right)\sum_{l=1}^kt_l^j & \cdots & \left(\frac{1}{k}\right)\sum_{l=1}^kt_l^{2j-1}\\
1 & t_m & \cdots & t_m^i\end{vmatrix}}
{\left({\begin{vmatrix}\left(\frac{1}{k}\right)\sum_{l=1}^k1 & \left(\frac{1}{k}\right)\sum_{l=1}^kt_l & \cdots & \left(\frac{1}{k}\right)\sum_{l=1}^kt_l^j\\
\left(\frac{1}{k}\right)\sum_{l=1}^kt_l & \left(\frac{1}{k}\right)\sum_{l=1}^kt_l^2 & \cdots & \left(\frac{1}{k}\right)\sum_{l=1}^kt_l^{j+1}\\
\vdots & \vdots & \ddots & \vdots\\
\left(\frac{1}{k}\right)\sum_{l=1}^kt_l^j & \left(\frac{1}{k}\right)\sum_{l=1}^kt_l^{j+1} & \cdots & \left(\frac{1}{k}\right)\sum_{l=1}^kt_l^{2j}\end{vmatrix}
\begin{vmatrix}\left(\frac{1}{k}\right)\sum_{l=1}^k1 & \left(\frac{1}{k}\right)\sum_{l=1}^kt_l & \cdots & \left(\frac{1}{k}\right)\sum_{l=1}^kt_l^{j-1}\\
\vdots & \vdots & \ddots & \vdots\\
\left(\frac{1}{k}\right)\sum_{l=1}^kt_l^{j-1} & \left(\frac{1}{k}\right)\sum_{l=1}^kt_l^j & \cdots & \left(\frac{1}{k}\right)\sum_{l=1}^kt_l^{2j-2}\end{vmatrix}}\right)^{1/2}}$}.
\end{equation*}
On the other hand, we see that
\begin{align*}
(AA^T)_{j,m}&=\sum_{i=1}^kP_j(t_i)P_m(t_i)\\
&=k\delta_{j,m},
\end{align*}
with the last equality by Equation~\eqref{eq:kroneckerprop}. Hence, $AA^T=kI_{k\times k}$, and therefore we have $C=A^TA=kI_{k\times k}$. Thus, $C_{j,j}=k$ for $j=1,\ldots,k$, as desired.
\end{proof}
Since the sum of the $r$th powers of all the eigenvalues of a matrix is equivalent to the trace of its $r$th power, Proposition~\ref{prop:momentmatching} implies the following statement:
\begin{corollary}
\label{cor:momentmatching}
Suppose $A\in\mathbb{R}^{n\times n}$ and $B\in\mathbb{R}^{k\times k}$ have distinct, nonnegative eigenvalues, and suppose we have $\mathrm{tr}\left(B^r/k\right)=\mathrm{tr}\left(A^r/n\right)$ for $1\leq r\leq k$. Then $\sigma_{\lceil\frac{jk}{n}\rceil-1}(B)\geq\sigma_j(A)\geq\sigma_{\lceil\frac{jk}{n}\rceil+1}(B)$ for $1\leq j\leq n$, where we define ``$\sigma_0(B)=\infty$" and ``$\sigma_{k+1}(B)=0$."
\end{corollary}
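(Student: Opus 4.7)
The plan is to apply Proposition~\ref{prop:momentmatching} directly, taking $S$ to be the multiset of eigenvalues of $A$ and $T$ the multiset of eigenvalues of $B$. By hypothesis, these are sets of $n$ and $k$ distinct nonnegative real numbers, and I will assume (as the proposition requires, and as the setup in Section~\ref{sec:background} does) that $k \mid n$. Once the hypotheses of the proposition are verified in this setting, its conclusion is essentially identical to that of the corollary after a small bookkeeping step for index conventions.

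The one nontrivial step is translating the trace hypothesis into the moment-matching hypothesis of Proposition~\ref{prop:momentmatching}. For any matrix $M$ diagonalizable over $\mathbb{R}$ with eigenvalues $\lambda_1, \ldots, \lambda_m$, the standard identity $\mathrm{tr}(M^r) = \sum_{i=1}^m \lambda_i^r$ holds for every $r \in \mathbb{N}$, by writing $M = P D P^{-1}$ and applying cyclicity of the trace. Since $A$ and $B$ have distinct real eigenvalues, both are diagonalizable, and so the corollary's hypothesis $\mathrm{tr}(A^r)/n = \mathrm{tr}(B^r)/k$ for $r = 1, \ldots, k$ becomes exactly $\sum_{i=1}^n \sigma_i(A)^r/n = \sum_{i=1}^k \sigma_i(B)^r/k$, which is the hypothesis of Proposition~\ref{prop:momentmatching} under the identification $a_i = \sigma_i(A)$ and $b_j = \sigma_j(B)$.

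The final step is purely bookkeeping: the corollary uses the decreasing-order convention $\sigma_1 \geq \sigma_2 \geq \cdots$ together with boundary values $\sigma_0(B) = \infty$ and $\sigma_{k+1}(B) = 0$, while the proposition states its bounds in the opposite order with the opposite boundary conventions. Matching end-cases term-by-term and reversing the chain of inequalities supplied by the proposition produces precisely the stated interlacing $\sigma_{\lceil jk/n \rceil - 1}(B) \geq \sigma_j(A) \geq \sigma_{\lceil jk/n \rceil + 1}(B)$. I anticipate no serious obstacle: the corollary is essentially a restatement of Proposition~\ref{prop:momentmatching} in the language of matrix spectra, and the only additional content is the elementary fact $\mathrm{tr}(M^r) = \sum_i \lambda_i^r$.
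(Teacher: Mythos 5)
Your proof takes essentially the same route as the paper's: apply Proposition~\ref{prop:momentmatching} to the spectra of $A$ and $B$, translating the trace hypothesis into power-sum moment matching via $\mathrm{tr}(M^r)=\sum_i\lambda_i^r$. Your justification of that identity via diagonalizability (which is exactly what the distinct-real-eigenvalues hypothesis buys you) is in fact slightly cleaner than the paper's appeal to positive-definiteness, and you correctly flag that the corollary implicitly inherits the $k\mid n$ hypothesis from the proposition; note that $k\mid n$ is also precisely what guarantees that the index $\lceil jk/n\rceil$ is preserved under the ascending-to-descending re-indexing $j\mapsto n+1-j$, $\lceil jk/n\rceil\mapsto k+1-\lceil jk/n\rceil$, so the ``bookkeeping'' step you describe does go through.
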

\begin{proof}
Define the empirical spectral distributions $\mathcal{A}=\{\sigma_1(A)\,\ldots,\sigma_n(A)\}$ and $\mathcal{B}=\{\sigma_1(B),
\ldots,\sigma_k(B)\}$ as above, and apply Proposition~\ref{prop:momentmatching} while setting $S=\mathcal{A}$ and $T=\mathcal{B}$. The result then follows since $\mathrm{tr}(A^r)=\sum_{i=1}^n(\sigma_i(A))^r$ and $\mathrm{tr}(B^s)=\sum_{j=1}^k(\sigma_j(B))^s$ for $1\leq r\leq n$ and $1\leq s\leq k$, which follows, in turn, because $A$ and $B$ are positive-definite.
\end{proof}

\subsection{Matching traces in expectation} Given $A=\kappa(X,X)$, finding $B$ such that Equation~\eqref{eq:interlacing} holds requires us to match the traces of the $r$th powers of $A$ and $B$ for $r=1,\ldots,k$. Since $A$ and $B$ are random matrices, we will concentrate on understanding the expected traces of $A^r$ and $B^r$. In practice, we assume that $n$ is large enough such that $\sigma_i(A)$ does not vary very much from its expected value in relative terms, and such that sampling from $X$ is the same as sampling from the $n$ samples that formed $A$. However, since $k$ is small, we would need to form $B$ repeatedly $m$ times, where $m$ depends on $k$ and the desired approximation accuracy, and empirically compute the average value of $\sigma_i(B)$. These $\sigma_i(B)s$ would then be used in the way of \eqref{eq:interlacing}.

While we do not know of a way of matching these expected traces exactly, in the next proposition we show a way of matching them approximately if (a) $\kappa$ is ``close to the Kronecker delta"; that is, if $\kappa$ has very quick decay away from the diagonal; and (b) we have access to a special probability distribution $E$ on $\mathbb{R}$. As we will show, $E$ is picked to match walk statistics of the subsampled matrix $B$ (interpreted as a weighted graph) with those of the original matrix $A$. More precisely, $\kappa$ must satisfy the condition of Equation~\eqref{eq:quickdecay} below for some $\epsilon>0$ to give the relative moment bound \eqref{eq:approxexpect}, and $E$ must satisfy \eqref{eq:specialdistrib}. See Figure~\ref{fig:quickdecay} for an illustration of the condition on $\kappa$.

\begin{figure}[ptbh]
\centering
\begin{minipage}{0.9\textwidth}
\centering
\begin{tikzpicture}
  \node (img)  {\includegraphics[height=1.5in]{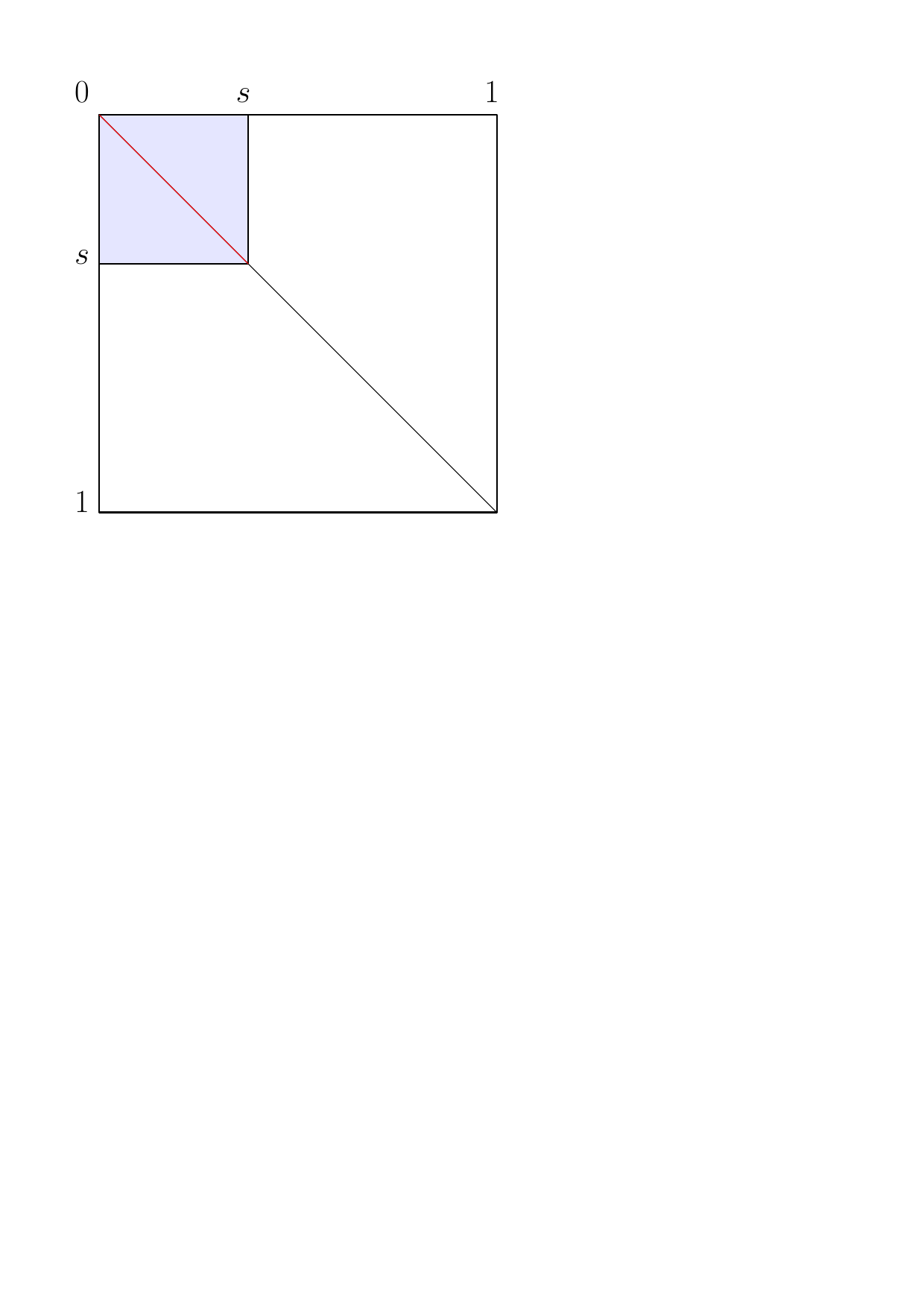}};
  \node[below=of img, node distance=0cm, yshift=1cm] {Distance from diagonal};
  \node[left=of img, node distance=0cm, rotate=90, anchor=center,yshift=-0.7cm] {Distance from diagonal};
\end{tikzpicture}\qquad
\begin{tikzpicture}
  \node (img)  {\includegraphics[height=1.7in,width=1.8in]{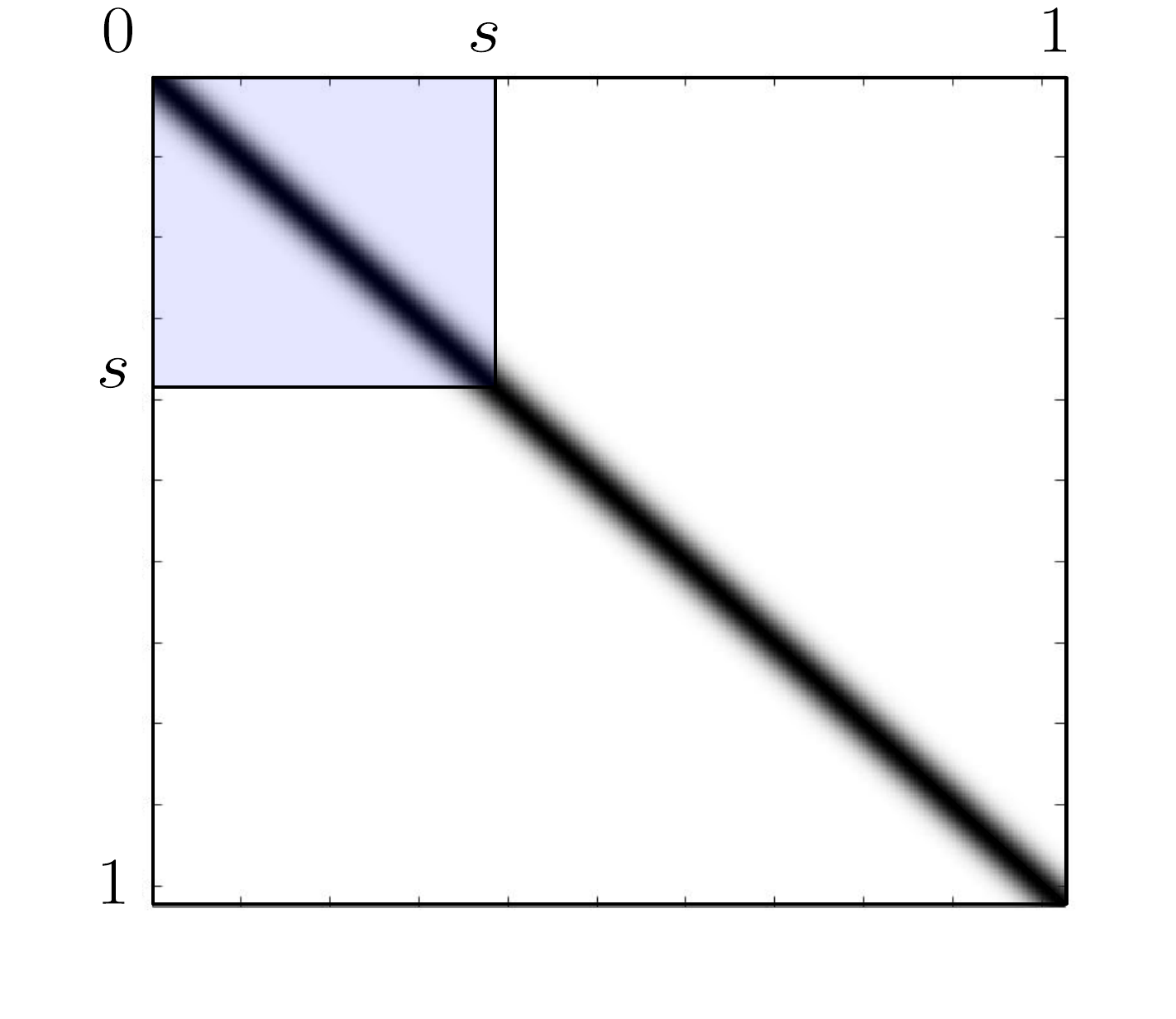}};
  \node[below=of img, node distance=0cm, yshift=1.48cm] {Distance from diagonal};
  \node[left=of img, node distance=0cm, rotate=90, anchor=center,yshift=-1.05cm] {Distance from diagonal};
 \end{tikzpicture}
\end{minipage}
\caption{The condition in \eqref{eq:quickdecay}: the left figure is a heatmap of the Kronecker delta on the region $[0,1]\times[0,1]$, and the right figure is a heatmap of the Gaussian kernel $\kappa_1(x,y)=e^{-1000(x-y)^2}$ on the same region. Informally, we may think of the integral of the Kronecker delta over the blue subregion $[0,s]\times[0,s]$ (the length of the red diagonal) as $s$ times its integral over the entire region $[0,1]\times[0,1]$ (the length of the entire diagonal). Of course, both integrals are formally 0. Similarly, we can see that the integral of $\kappa_1$ over $[0,s]\times[0,s]$ is approximately $s$ times its integral over $[0,1]\times[0,1]$. This is contrasted with the case, for example, of the Gaussian kernel $\kappa_2(x,y)=e^{-(x-y)^2/10000}$, whose integral over $[0,s]\times[0,s]$ is approximately $s^2$ times its integral over $[0,1]\times[0,1]$. Thus, the condition \eqref{eq:quickdecay} makes precise the way in which $\kappa_1$ does and $\kappa_2$ does not have fast decay away from the diagonal.}
\label{fig:quickdecay}
\end{figure}

In general, as we see in \cite{jinmusco}, approximate moment matching for a guarantee of pointwise closeness of two cumulative distribution functions may require prohibitively close tolerances. This is likely the main theoretical reason for the requirement that $\kappa$ decays quickly away from the diagonal. We will look at some examples in Section~\ref{sec:numerical} of how quick the decay has to be in practice for \eqref{eq:interlacing}. The recent work of \cite[Theorem 1]{muscomusco} suggests that we may, however, be able to bound our approximate quantile estimates in the Wasserstein-1 metric by a perturbative bound from the ``true" quantile estimate.

Once we have (a) and (b), we pick $B$ such that $\mathbb{E}(\mathrm{tr}(A^r)/n)=\mathbb{E}(\mathrm{tr}(B^r)/k)$ for all $r=1,\ldots,k$ using the following strategy:
\begin{enumerate}
\item we pick a set $Y$ of some points $\mathbf{y}_1,\ldots,\mathbf{y}_k$ at random from $X$;
\item we scale each $\mathbf{y}_i$ by a random number $z$ from a distribution $E$ satisfying \eqref{eq:specialdistrib}; and
\item we set $B=\kappa(Y,Y)$.
\end{enumerate}
We then repeat these steps $m$ times to find the average $\sigma_j(B)$ for $j=1,\ldots,k$.

In order to show why this graph-theoretic approach may work, we first need to fix notation for a walk on the complete graph on $n$ vertices $K_n$. We identify a function $\pi:\{0,\ldots,r\}\to\{1,\ldots,n\}$ with a walk of length $r$ starting (and ending) at a vertex $m$ of the complete graph $K_n$, where the value of $\pi(i)$ is the index of the vertex of $K_n$ visited at the $i$th step. In particular, note that since $\pi$ is a walk, $\pi(0)=\pi(r)=m$. We denote by $|\pi|$ the cardinality of the image of $\pi$. Then we have the following proposition:

\begin{proposition}
\label{prop:momentmatch}
Let $D,k,n\in\mathbb{N}$ with $k\mid n$. Suppose $z$ is sampled from a real-valued distribution $E$ such that
\begin{equation}
\label{eq:specialdistrib}
\mathbb{E}(z^j)=\frac{k}{n}\frac{\binom{n}{j+1}}{\binom{k}{j+1}}
\end{equation}
for all $1\leq j\leq k-1$. Define $\mathbf{x}_i\sim U[0,1]^D$ and $\mathbf{y}_j\sim(1/z)^{1/D}U[0,1]^D$ for $1\leq i\leq n$ and $1\leq j\leq k$; define $X=\{x_1,\ldots,y_n\}$ and $Y=\{y_1,\ldots,y_k\}$; set $A=\kappa(X,X)$ and $B=\kappa(Y,Y)$; and suppose that $\kappa~:~\mathbb{R}^D\times\mathbb{R}^D\to\mathbb{R}$ is a positive-definite function such that, and any walk $\pi$ with $|\pi|=l\leq n$ on $K_n$,
\begin{equation}
\label{eq:quickdecay}
\left\vert\frac{\int_{[0,s]^l}\prod_{i=1}^l\kappa(\mathbf{x}_{\pi(i-1)},\mathbf{x}_{\pi(i)})d\mathbf{x_\pi}}{\int_{[0,1]^l}\prod_{i=1}^l\kappa(\mathbf{x}_{\pi(i-1)},\mathbf{x}_{\pi(i)})d\mathbf{x_\pi}}-s\right\vert<\epsilon,
\end{equation}
for all $s\in[t,1]$, for some $0<t<1$.  Then
\begin{equation}
\label{eq:approxexpect}
\left\vert\frac{\mathbb{E}\left(\mathrm{tr}\left(B^r/k\right)\right)}{\mathbb{E}\left(\mathrm{tr}\left(A^r/n\right)\right)}-1\right\vert\leq\epsilon
\end{equation}
for $r=1,\ldots,k$.
\end{proposition}
\begin{proof}
First, note that 
\begin{equation*}
(A^r)_{mm}=\sum_{l=1}^r\sum_{\pi}\prod_{i=1}^rA_{\pi(i-1)\pi(i)},
\end{equation*}
where the inner sum ranges over all walks $\pi$ of length $r$ that visit $l$ distinct vertices on the complete graph $K_n$, starting at the vertex labeled $m$. Denote the set of all such walks, starting at any vertex, by $W_l^r(K_n)$. This bookkeeping of walks will be important for our argument to follow. Similarly, we have
\begin{equation*}
(B^r)_{mm}=\sum_{l=1}^r\sum_{\psi_l^m}\prod_{i=1}^rB_{\psi(i-1)\psi(i)},
\end{equation*}
where the inner sum ranges over all walks $\psi_m$ of length $r$ that visit $l$ distinct vertices on the complete graph $K_k$, starting at the vertex labeled $m$. Again, denote the set of all such walks, starting at any vertex, by $W_l^r(K_k)$.

Now, note that by linearity of expectation,
\begin{align*}
\mathbb{E}\left(\mathrm{tr}(A^r)\right)&=\sum_{m=1}^n\mathbb{E}\left((A^r)_{mm}\right)\\
&=\sum_{m=1}^n\mathbb{E}\left(\sum_{l=1}^r\sum_{\pi\in W^r_l(K_n)}\prod_{i=1}^rA_{\pi(i-1)\pi(i)}\right)\\
&=\sum_{l=1}^r\sum_{\pi\in W^r_l(K_n)}\mathbb{E}\left(\prod_{i=1}^rA_{\pi(i-1)\pi(i)}\right).
\end{align*}
By the definition of expectation and the variables $A_{ij}$, for each $\pi\in W^r_l(K_n)$, we have
\begin{align*}
\mathbb{E}\left(\prod_{i=1}^rA_{\pi(i-1)\pi(i)}\right)&=\int_{\mathbb{R}^l}\prod_{i=1}^r\kappa(\mathbf{x}_{\pi(i-1)},\mathbf{x}_{\pi(i)})f_\pi(\mathbf{x_{S_\pi}})d\mathbf{x_{S_\pi}}\\
&=\int_{[0,1]^l}\prod_{i=1}^r\kappa(\mathbf{x}_{\pi(i-1)},\mathbf{x}_{\pi(i)})d\mathbf{x_{S_\pi}},
\end{align*}
where $S_\pi$ is the set of vertices visited on the walk $\pi$ and $f_\pi$ is the probability density function of the joint distribution of the random variable $\mathbf{x_{S_\pi}}=(\mathbf{x}_{\pi(1)},\ldots,\mathbf{x}_{\pi(r)})$. Similarly, defining the $g_\psi$ to be the density of the variable $\mathbf{y_{S_\psi}}=(\mathbf{y}_{\psi(1)},\ldots,\mathbf{y}_{\psi(l)})$ for each $\psi\in W^r_l(K_k)$, we have
\begin{align}
\label{eq:propSummandType}
\mathbb{E}\left(\prod_{i=1}^rB_{\psi(i-1),\psi(i)}\right)&=\int_{\mathbb{R}^k}\int_{\mathbb{R}^l}\prod_{i=1}^r\kappa(\mathbf{y}_{\psi(i-1)},\mathbf{y}_{\psi(i)})g_\psi(\mathbf{y_{S_\psi}})d\mathbf{y_{S_\psi}}dz\nonumber\\
&=\int_{\mathbb{R}^k}\int_{\mathbb{R}^l}\prod_{i=1}^r\kappa(\mathbf{y}_{\psi(i-1)},\mathbf{y}_{\psi(i)})f_\psi(F_\psi^{-1}(\mathbf{y_{S_\psi}}))|\mathrm{Jac}(F_\psi^{-1})|d\mathbf{y_{S_\psi}}dz\nonumber\\
&=\int_{\mathbb{R}^k}\int_{[0,1/z]^l}\prod_{i=1}^r\kappa(\mathbf{y}_{\psi(i-1)},\mathbf{y}_{\psi(i)})z^ld\mathbf{y_{S_\psi}}dz\nonumber,
\end{align}
where $F_\psi:\mathbb{R}^l\to\mathbb{R}^l$ is the projection onto the indices $S_\psi$ of the function defined by $F(\mathbf{x})=(1/z,\ldots,1/z)\cdot\mathbf{x}$, restricted to the indices $S_\psi$. Note that the second equality follows from the change-of-variables formula for probability density functions applied to the variable $\mathbf{y_{S_\psi}}$, and the third equality follows from the definition of $F$ and the fact that $f_{\psi}=1$ for every $\psi$.

Hence, we see that
\begin{align*}
\frac{\mathbb{E}(\mathrm{tr}(B^r))}{\mathbb{E}(\mathrm{tr}(A^r))}&=\frac{\sum_{l=1}^r\sum_{\psi\in W^r_l(K_k)}\int_{\mathbb{R}^k}\int_{[0,1/z]^l}\prod_{i=1}^r\kappa(\mathbf{y}_{\psi(i-1)},\mathbf{y}_{\psi(i)})z^ld\mathbf{y_{S_\psi}}dz}{\sum_{l=1}^r\sum_{\pi\in W^r_l(K_n)}\int_{[0,1]^l}\prod_{i=1}^r\kappa(\mathbf{x}_{\pi(i-1)},\mathbf{x}_{\pi(i)})d\mathbf{x_{S_\pi}}}\\
&=\frac{\sum_{l=1}^r\frac{\binom{k}{l}}{\binom{n}{l}}\sum_{\pi\in W^r_l(K_n)}\int_{\mathbb{R}^k}\int_{[0,1/z]^l}\prod_{i=1}^r\kappa(\mathbf{y}_{\pi(i-1)},\mathbf{y}_{\pi(i)})z^ld\mathbf{y_{S_\pi}}dz}{\sum_{l=1}^r\sum_{\pi\in W^r_l(K_n)}\int_{[0,1]^l}\prod_{i=1}^r\kappa(\mathbf{x}_{\pi(i-1)},\mathbf{x}_{\pi(i)})d\mathbf{x_{S_\pi}}},
\end{align*}
where the second equality follows from the fact that, for every walk of length $r$ with $1\leq r\leq k$ visiting $l$ distinct vertices on $K_k$, there are $\binom{n}{l}/\binom{k}{l}$ such walks on $K_n$. Then, by our assumption on $\kappa$ in Equation~\eqref{eq:quickdecay},
\begin{align*}
(1-\epsilon)\frac{k}{n}&=\frac{(1-\epsilon)\left(\sum_{l=1}^r\frac{\binom{k}{l}}{\binom{n}{l}}\sum_{\pi\in W^r_l(K_n)}\int_{[0,1]^l}\prod_{i=1}^r\kappa(\mathbf{y}_{\pi(i-1)},\mathbf{y}_{\pi(i)})d\mathbf{y_{S_\pi}}\left(\frac{k}{n}\frac{\binom{n}{l}}{\binom{k}{l}}\right)\right)}{\sum_{l=1}^r\sum_{\pi\in W^r_l(K_n)}\int_{[0,1]^l}\prod_{i=1}^r\kappa(\mathbf{x}_{\pi(i-1)},\mathbf{x}_{\pi(i)})d\mathbf{x_{S_\pi}}}\\
&=\frac{(1-\epsilon)\sum_{l=1}^r\frac{\binom{k}{l}}{\binom{n}{l}}\sum_{\pi\in W^r_l(K_n)}\int_{[0,1]^l}\prod_{i=1}^r\kappa(\mathbf{y}_{\pi(i-1)},\mathbf{y}_{\pi(i)})d\mathbf{y_{S_\pi}}\mathbb{E}(z^{l-1})}{\sum_{l=1}^r\sum_{\pi\in W^r_l(K_n)}\int_{[0,1]^l}\prod_{i=1}^r\kappa(\mathbf{x}_{\pi(i-1)},\mathbf{x}_{\pi(i)})d\mathbf{x_{S_\pi}}}\\
&=\frac{\sum_{l=1}^r\sum_{\pi\in W^r_l(K_n)}\frac{\binom{k}{l}}{\binom{n}{l}}\int_{\mathbb{R}^k}\frac{1-\epsilon}{z}\int_{[0,1]^l}\prod_{i=1}^r\kappa(\mathbf{y}_{\pi(i-1)},\mathbf{y}_{\pi(i)})z^ld\mathbf{y_{S_\pi}}dz}{\sum_{l=1}^r\sum_{\pi\in W^r_l(K_n)}\int_{[0,1]^l}\prod_{i=1}^r\kappa(\mathbf{x}_{\pi(i-1)},\mathbf{x}_{\pi(i)})d\mathbf{x_{S_\pi}}}\\
&\leq\frac{\sum_{l=1}^r\sum_{\pi\in W^r_l(K_n)}\frac{\binom{k}{l}}{\binom{n}{l}}\int_{\mathbb{R}^k}\int_{[0,1/z]^l}\prod_{i=1}^r\kappa(\mathbf{y}_{\pi(i-1)},\mathbf{y}_{\pi(i)})z^ld\mathbf{y_{S_\pi}}dz}{\sum_{l=1}^r\sum_{\pi\in W^r_l(K_n)}\int_{[0,1]^l}\prod_{i=1}^r\kappa(\mathbf{x}_{\pi(i-1)},\mathbf{x}_{\pi(i)})d\mathbf{x_{S_\pi}}}\\
&=\frac{\mathbb{E}(\mathrm{tr}(B^r))}{\mathbb{E}(\mathrm{tr}(A^r))}.
\end{align*}
By linearity of trace and expectation, we thus get $1-\epsilon\leq\mathbb{E}(\mathrm{tr}(B^r/k))/\mathbb{E}(\mathrm{tr}(A^r/n))$. The other inequality comprising Equation~\eqref{eq:approxexpect} follows from Equation~\eqref{eq:quickdecay} in a similar way.
\end{proof}

Two questions immediately arise from this last proposition. First, it is not clear which functions $\kappa$ satisfy Equation~\eqref{eq:quickdecay}. We explore this topic empirically in Section~\ref{sec:numerical}. For the Gaussian kernel $\kappa:\mathbb{R}\times\mathbb{R}\to\mathbb{R}$ defined by $\kappa(x,y)=e^{-\lambda|x-y|^2}$, in particular, we note that for each $\pi\in W^r_l(K_n)$ and $s\in(0,1]$,
\begin{equation*}
\lim_{\lambda\to\infty}\frac{\int_{[0,s]^l}e^{-\lambda\sum_{i=1}^r|x_{\pi(i-1)}-x_{\pi(i)}|^2}d\mathbf{x_{S_\pi}}}{\int_{[0,1]^l}e^{-\lambda\sum_{i=1}^r|x_{\pi(i-1)}-x_{\pi(i)}|^2}d\mathbf{x_{S_\pi}}}=s.
\end{equation*}
Hence, for each $\epsilon>0$, there exists a length scale $\lambda$ that makes $\kappa$ satisfy \eqref{eq:quickdecay}. Analogous results may be obtained for other radial basis function (RBF) kernels by finding appropriate limits with respect to the length scale (as with respect to $\lambda$ above). However, the exact relationship of $s$, $l$, and $\lambda$ in the previous display to a given tolerance $\epsilon$ as in \eqref{eq:quickdecay} warrants further study, since it may allow for a more precise formulation of moment bounds. This may be done in combination with studies similar to \cite{jinmusco,muscomusco}.

Second, it is not clear {\it a priori} whether or not any distribution $E$ that satisfies \eqref{eq:specialdistrib} in the above proposition exists, and if it does, where its support lies. If such a distribution exists, then the method outlined at the beginning of this section should work. It turns out that such a distribution does exist; we next give an example.

\begin{example}
\label{ex:samplingdistribution}Fix $n=49$, $k=7$. We construct a distribution $E$ satisfying the moment conditions of \eqref{eq:specialdistrib}. To do so, we assume that $E$ has finite support. This simplifying assumption makes the second equation of \eqref{eq:specialdistrib} equivalent to the system of 7 equations in 8 unknowns
\begin{align*}
a+b+c+d&=(k/n)\textstyle\binom{n}{1}/\binom{k}{1}=1\\
a\alpha+b\beta+c\gamma+d\delta&=(k/n)\textstyle\binom{n}{2}/\binom{k}{2}=8\\
a\alpha^2+b\beta^2+c\gamma^2+d\delta^2&=(k/n)\textstyle\binom{n}{3}/\binom{k}{3}=\frac{376}{5}\\
a\alpha^3+b\beta^3+c\gamma^3+d\delta^3&=(k/n)\textstyle\binom{n}{4}/\binom{k}{4}=\frac{4324}{5}\\
a\alpha^4+b\beta^4+c\gamma^4+d\delta^4&=(k/n)\textstyle\binom{n}{5}/\binom{k}{5}=12972\\
a\alpha^5+b\beta^5+c\gamma^5+d\delta^5&=(k/n)\textstyle\binom{n}{6}/\binom{k}{6}=285384\\
a\alpha^6+b\beta^6+c\gamma^6+d\delta^6&=(k/n)\textstyle\binom{n}{7}/\binom{k}{7}=12271512.
\end{align*}
More specifically, we assume that $z$ takes on four distinct values $\alpha,\beta,\gamma,\delta$; this is done to give $z$ enough degrees of freedom to satisfy \eqref{eq:specialdistrib}. (Otherwise, we would not have enough unknowns to satisfy the 7 equations above.) The values $a\approx0.41166$, $b\approx0.56810$, $c\approx0.020241$, $d\approx1.4709\cdot10^{-6}$, $\alpha\approx4.8651$, $\beta\approx9.6827$, $\gamma\approx24.519$, and $\delta\approx130.90$ form a solution to this system. Hence, taking $E$ to be the distribution that gives $\alpha,\beta,\gamma$, and $\delta$ with probabilities $a$, $b$, $c$, and $d$, respectively, we find that $E$ yields \eqref{eq:specialdistrib}. Note that this is equivalent to simply letting $Y$ be a random subset of points in $X$ scaled by $\alpha$, $\beta$, $\gamma$, and $\delta$, with probabilities $a$, $b$, $c$, and $d$, respectively.
\end{example}

We found a distribution $E$ in Example~\ref{ex:samplingdistribution} that we may use to build a matrix $B$ from $A$ such that \eqref{eq:approxexpect} holds, but only for the case that $n=49$ and $k=7$. To do so, we assumed $E$ is discrete, which yielded a straightforward system of polynomial equations we could use to find $E$ from the second equation of \eqref{eq:specialdistrib}. This construction naturally leads to two questions: first, can we use this technique to find such a distribution for every $n,k$ such that $k|n$? And second, will the support of such a distribution take values that are ``too large" to truncate $\kappa$ in such a manner as to make \eqref{eq:approxexpect} provide a meaningfully-small $\epsilon$? To answer these last two questions, we prove the following proposition. It states that we may always find a distribution with nonnegative support satisfying \eqref{eq:specialdistrib}, although further questions about its support may be harder to answer.

\begin{proposition}
\label{prop:distributionexistence}
Let $k,n\in\mathbb{N}$ such that $k$ is odd and $k|n$. There exists a distribution $E$ satisfying \eqref{eq:specialdistrib}.
\end{proposition}
\begin{proof}
As for the two specific values of $k$ and $n$ considered in Example~\ref{ex:samplingdistribution} above, the moment conditions are $k$ prescribed moment conditions for a univariate probability distribution $z$ with nonnegative support:
\begin{equation*}
\mathbb{E}(z^l)=\frac{\frac{k}{n}\binom{n}{l+1}}{\binom{k}{l+1}},\quad l=0,\ldots,k-1.
\end{equation*}
(Note that these moment conditions are distinct from the moment conditions we considered in Proposition~\ref{prop:momentmatching}.) But this is just the so-called Stieltjes moment problem, which is well-known to have a solution if certain moment matrices are positive semidefinite and full-rank (or, equivalently, positive definite). For a complete treatment of this question and questions on related moment problems, see the treatise of Curto and Fialkow on the subject \cite[Theorem~5.3]{curtofialkow}. From that result, we see that showing the Proposition comes down to showing that the Hankel matrices
\begin{align*}
H_{k,n}&=\begin{bmatrix}\mu_0 & \mu_1 & \cdots & \mu_{(k-1)/2}\\
\mu_1 & \mu_2 & \cdots & \mu_{(k-1)/2+1}\\
\vdots & \vdots & \ddots & \vdots\\
\mu_{(k-1)/2} & \mu_{(k-1)/2+1} & \cdots & \mu_{k-1}\end{bmatrix}\quad\mathrm{and}\\
H'_{k,n}&=\begin{bmatrix}\mu_1 & \mu_2 & \cdots & \mu_{(k-1)/2}\\
\mu_2 & \mu_3 & \cdots & \mu_{(k-1)/2+1}\\
\vdots & \vdots & \ddots & \vdots\\
\mu_{(k-1)/2} & \mu_{(k-1)/2+1} & \cdots & \mu_{k-1}\end{bmatrix}
\end{align*}
are positive definite, where $\mu_l={\frac{k}{n}\binom{n}{l+1}}/{\binom{k}{l+1}}$ for $l=0,\ldots,k-1$. We see this once we realize both $H_{k,n}$ and $H'_{k,n}$ as Gram matrices associated to linearly independent sets of vectors in a Hilbert space. In particular, consider the space $V$ of square-integrable functions on the compact interval $[0,1]$ with respect to the Radon-Nikodym derivative $x^{n-k-1}(1-x)^{k+1}$. For $i=0,\ldots,(k-1)/2$ define $v_i=\sqrt{n-k}(1/(1-x))^{i+1/2}$; and for $j=0,\ldots,(k-1)/2-1$, define $w_i=\sqrt{n-k}(1/(1-x))^{i+1}$. Clearly, we have $v_i,w_j\in V$ for $i=0,\ldots,(k-1)/2$ and $j=0,\ldots,(k-1)/2-1$. Furthermore, the sets $\{v_i\}_{i=0}^{(k-1)/2}$ and $\{w_j\}_{j=0}^{(k-1)/2-1}$ are linearly independent, and we see that
\begin{align*}
\frac{1}{\frac{k}{n}\binom{n}{k}}\mu_l&=\frac{1}{\frac{k}{n}\binom{n}{k}}\frac{\frac{k}{n}\binom{n}{l+1}}{\binom{k}{l+1}}\\
&=\frac{1}{\binom{n}{k}}\frac{\binom{n}{k}}{\binom{n-(l+1)}{k-(l+1)}}\\
&=\frac{(n-k)!(k-(l+1))!}{(n-(l+1))!}\\
&=(n-k)\int_0^1x^{n-(l+2)-(k-(l+1))}(1-x)^{k-l}dx\\
&=\int_0^1\left(\sqrt{n-k}\frac{1}{(1-x)^{i+1/2}}\right)\left(\sqrt{n-k}\frac{1}{(1-x)^{j+1/2}}\right)x^{n-k-1}(1-x)^{k+1}dx\\
&=\langle v_i,v_j\rangle_V.
\end{align*}
whenever $i+j=l$ for $l=0,\ldots,k-1$. Hence, the Gram matrix $H_{k,n}$ associated to $\{v_i\}_{i=0}^{(k-1)/2}$ in $V$ is positive definite. Similarly,
\begin{align*}
\frac{1}{\frac{k}{n}\binom{n}{k}}\mu_{l+1}&=\int_0^1\left(\sqrt{n-k}\frac{1}{(1-x)^{i+1}}\right)\left(\sqrt{n-k}\frac{1}{(1-x)^{j+1}}\right)x^{n-k-1}(1-x)^{k+1}dx\\
&=\langle w_i,w_j\rangle_V
\end{align*}
whenever $i+j=l$ for $l=0,\ldots,k-2$, so $H'_{k,n}$ associated to $\{w_j\}_{j=0}^{(k-1)/2-1}$ in $V$ is also positive definite.
\end{proof}
Here, we note two things: first, we assumed $k$ is odd in showing the existence of $E$. The case when $k$ is even is handled similarly, so we omit it for brevity. The main theoretical difference is that we use Theorem~5.1 of \cite{curtofialkow} (and therefore that the distribution $E$ thus obtained is actually unique, but that is irrelevant for our examples) instead of Theorem~5.3. Second, computing a distribution as in Example~\ref{ex:samplingdistribution} may be no small task for large values of $k$ and may take a lot of computing power. Nevertheless, since $E$ does not depend on the specific choice of $\kappa$ as long as $\kappa$ satisfies the condition of Equation~\eqref{eq:quickdecay}, we may precompute the values $E$ for each combination of values of $k,n$. This is the ``preprocessing step" alluded to in the introduction.

\section{Numerical experiments}
\label{sec:numerical}
The last proposition thus completes an answer for how, given $X=\{\mathbf{x}_1,\ldots,\mathbf{x}_n\}$ with $\mathbf{x}_i\in U[0,1]^D$ for $1\leq i\leq n$ and $A=\kappa(X,X)$, we may design a framework for obtaining a matrix $B$ such that Corollary~\ref{cor:momentmatching} applies in expectation. Namely, we will fix $k$ and $n$, precompute $E$ as in Proposition~\ref{prop:distributionexistence} above, and then take $B=\kappa(Y,Y)$, where $Y=\{\mathbf{y}_1,\ldots,\mathbf{y}_k\}$ is defined as in Proposition~\ref{prop:momentmatch} using the distribution of Proposition~\ref{prop:distributionexistence}. That is, $Y$ is the set obtained by multiplying a random subsample of $X$ by a random scalar picked using $E$. Because this way of obtaining $Y$ is probabilistic and only guarantees moment matching in expectation, we thus need to find the average of the $j$th largest eigenvalue of $B$, for $1\leq j\leq k$, for a number of trials $m$ of forming such matrices $B$. Even though $E$ depends on $n$ and $k$, empirically $m$ seems to depends on $k$ alone. The average $\sigma_j(B)$s should then correspond to bounds for the $k$ quantiles of the eigenvalues of $A$ as in \eqref{eq:interlacing}. First, we look at the performance of this framework using the distribution $E$ computed in Example~\ref{ex:samplingdistribution} (that is, we set $n=49$ and $k=7$).

\begin{example}
\label{ex:firstex}
Let $n=49$, $k=7$, $\kappa_1:\mathbb{R}^3\times\mathbb{R}^3\to\mathbb{R}$ be defined by $\kappa_1(\mathbf{x},\mathbf{y})=e^{-10000|\mathbf{x}-\mathbf{y}|}$. Since $n$ is so small in this case, we perform 100 trials of forming $A=\kappa(X,X)$ and average the $j$th largest eigenvalue for $1\leq j\leq n$. We then perform $m=100000$ trials of forming $B=\kappa(Y,Y)$ according to the scheme in Proposition~\ref{prop:momentmatch} using the distribution from Proposition~\ref{prop:distributionexistence}, and we average the $j$th largest eigenvalue thus obtained for $1\leq j\leq k$. The resulting averaged eigenvalues of $A$ are plotted in Figure~\ref{fig:firstex1}, along with the eigenvalue quantile bounds obtained from the averaged eigenvalues of $B$. (We repeat each eigenvalue of $B$ $49/7=7$ times in order to better visualize the quantile bounds given for the eigenvalues of $A$ by Corollary~\ref{cor:momentmatching}, as in Figure~\ref{fig:momentinterlace}.) This same setup is repeated in Figure~\ref{fig:firstex2} for $A=\kappa_2(X,X)$ and $B=\kappa_2(Y,Y)$, where $\kappa_2:\mathbb{R}^6\times\mathbb{R}^6\to\mathbb{R}$ is defined by $\kappa_2(\mathbf{x},\mathbf{y})=e^{-100|\mathbf{x}-\mathbf{y}|}$.
\end{example}

\begin{figure}[ptbh]
\centering
\begin{minipage}{0.65\textwidth}
\begin{tikzpicture}
  \node (img)  {\includegraphics[height=2.75in]{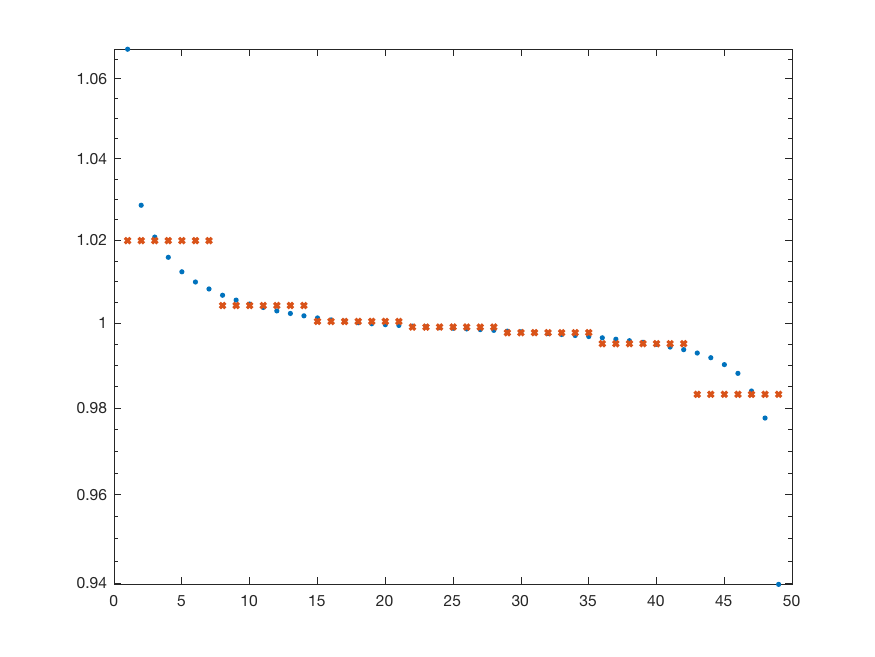}};
  \node[below=of img, node distance=0cm, yshift=1.1cm] {Index of eigenvalue};
  \node[left=of img, node distance=0cm, rotate=90, anchor=center,yshift=-0.7cm] {Magnitude};
 \end{tikzpicture}
\end{minipage}
\caption{The averaged eigenvalues of $A=\kappa_1(X,X)$ (blue dots) together with the repeated, averaged eigenvalues of $B=\kappa_1(Y,Y)$ (red crosses).}\label{fig:firstex1}
\end{figure}

\begin{figure}[ptbh]
\centering
\begin{minipage}{0.65\textwidth}
\begin{tikzpicture}
  \node (img)  {\includegraphics[height=2.75in]{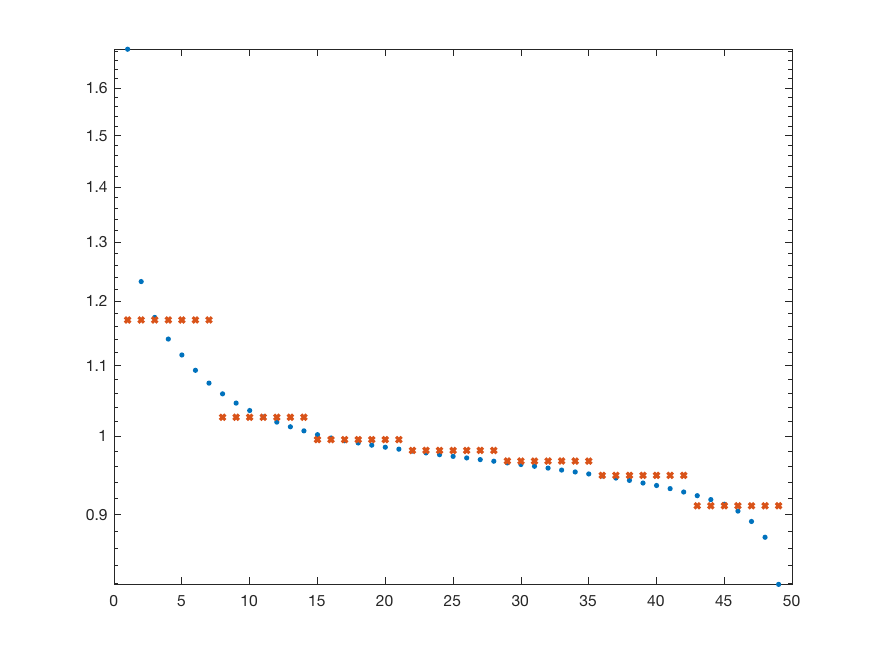}};
  \node[below=of img, node distance=0cm, yshift=1.1cm] {Index of eigenvalue};
  \node[left=of img, node distance=0cm, rotate=90, anchor=center,yshift=-0.7cm] {Magnitude};
 \end{tikzpicture}
\end{minipage}
\vspace{-.1cm}
\caption{The averaged eigenvalues of $A=\kappa_2(X,X)$ (blue dots) together with the repeated, averaged eigenvalues of $B=\kappa_2(Y,Y)$ (red crosses).}\label{fig:firstex2}
\vspace{-.5cm}
\end{figure}

Note the length scale of $\kappa$: setting $\kappa$ to have such quick decay away from the diagonal is necessary to have a meaningful correlation between the quantile bounds obtained from the eigenvalue distribution of $B$ and those from the eigenvalue distribution of $A$. We will see in Example~\ref{ex:thirdex} what happens with our framework if this is not the case.

\begin{example}
\label{ex:secondex}
We set $n=729$, $k=9$, and $\kappa_3:\mathbb{R}\times\mathbb{R}\to\mathbb{R}$ to be defined by $\kappa_3(\mathbf{x},\mathbf{y})=e^{-5000000(\mathbf{x}-\mathbf{y})^2}$. We form $A=\kappa(X,X)$ and plot its $n$ eigenvalues. We then perform $m=100000$ trials of forming $B=\kappa(Y,Y)$ from samples taken from those used to form $A$ and average the $j$th largest eigenvalue thus obtained for $1\leq j\leq k$. The eigenvalues of $A$ are plotted in Figure~\ref{fig:secondex1}, along with the eigenvalue quantile bounds obtained from the averaged eigenvalues of $B$. (As before, we repeat each eigenvalue of $B$ $729/9=81$ times in order to visualize the quantile bounds given for the eigenvalues of $A$ by Corollary~\ref{cor:momentmatching}.) This same setup is then repeated in Figure~\ref{fig:secondex2} for $A=\kappa_4(X,X)$ and $B=\kappa_4(Y,Y)$, where $\kappa_4:\mathbb{R}^8\times\mathbb{R}^8\to\mathbb{R}$ is defined by $\kappa_4(\mathbf{x},\mathbf{y})=e^{-20|\mathbf{x}-\mathbf{y}|^2}$.
\end{example}

\begin{figure}[ptbh]
\centering\vspace{-3.5cm}
\begin{minipage}{0.65\textwidth}
\begin{tikzpicture}
  \node (img)  {\includegraphics[width=4in]{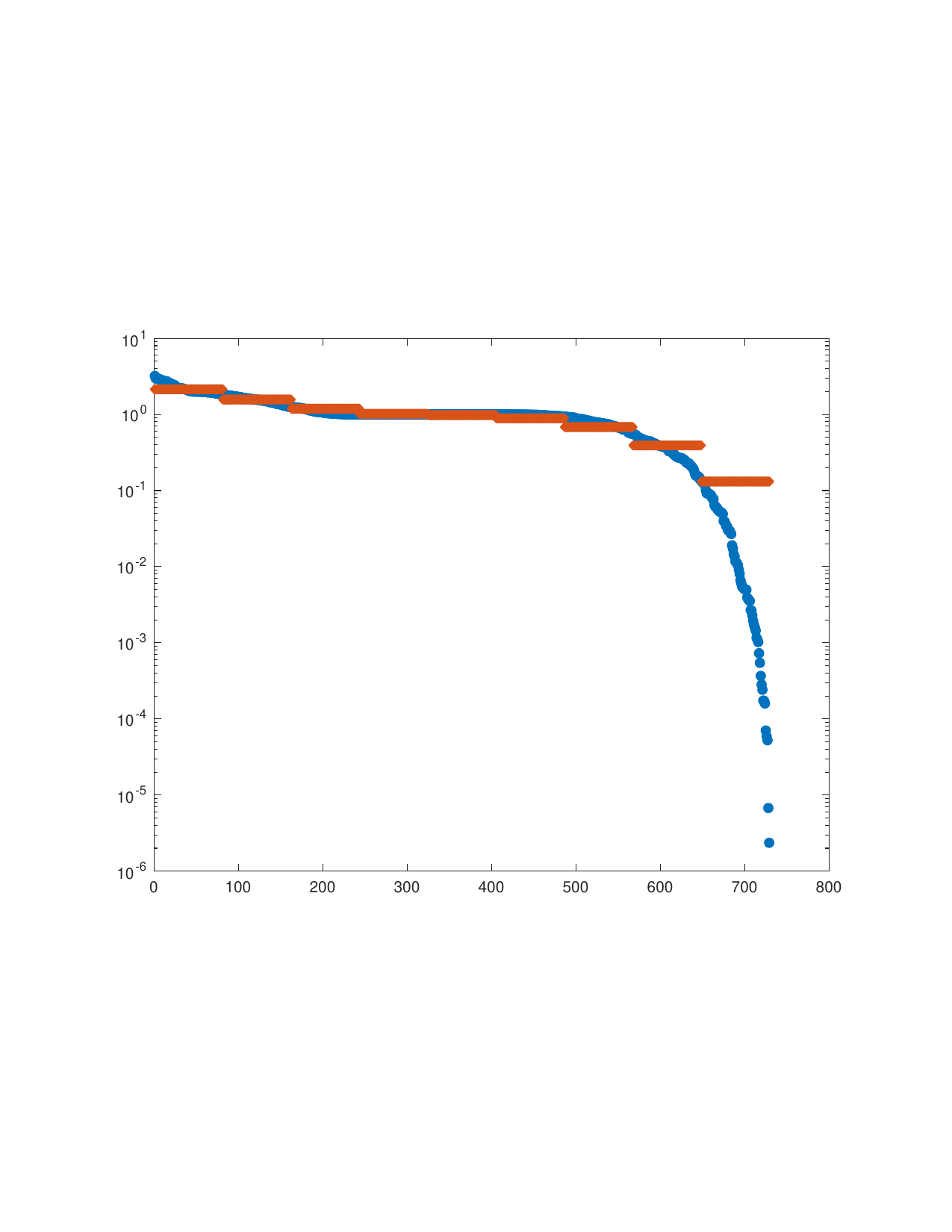}};
  \node[below=of img, node distance=0cm, yshift=4.2cm] {Index of eigenvalue};
  \node[left=of img, node distance=0cm, rotate=90, anchor=center,yshift=-1.15cm] {Magnitude};
 \end{tikzpicture}
\end{minipage}
\vspace{-2.8cm}
\caption{The averaged eigenvalues of $A=\kappa_3(X,X)$ (blue dots) together with the repeated, averaged eigenvalues of $B=\kappa_3(Y,Y)$ (red crosses).}\label{fig:secondex1}
\end{figure}

\begin{figure}[htbp]
\centering\vspace{-3.5cm}
\begin{minipage}{0.65\textwidth}
\begin{tikzpicture}
  \node (img)  {\includegraphics[width=4in]{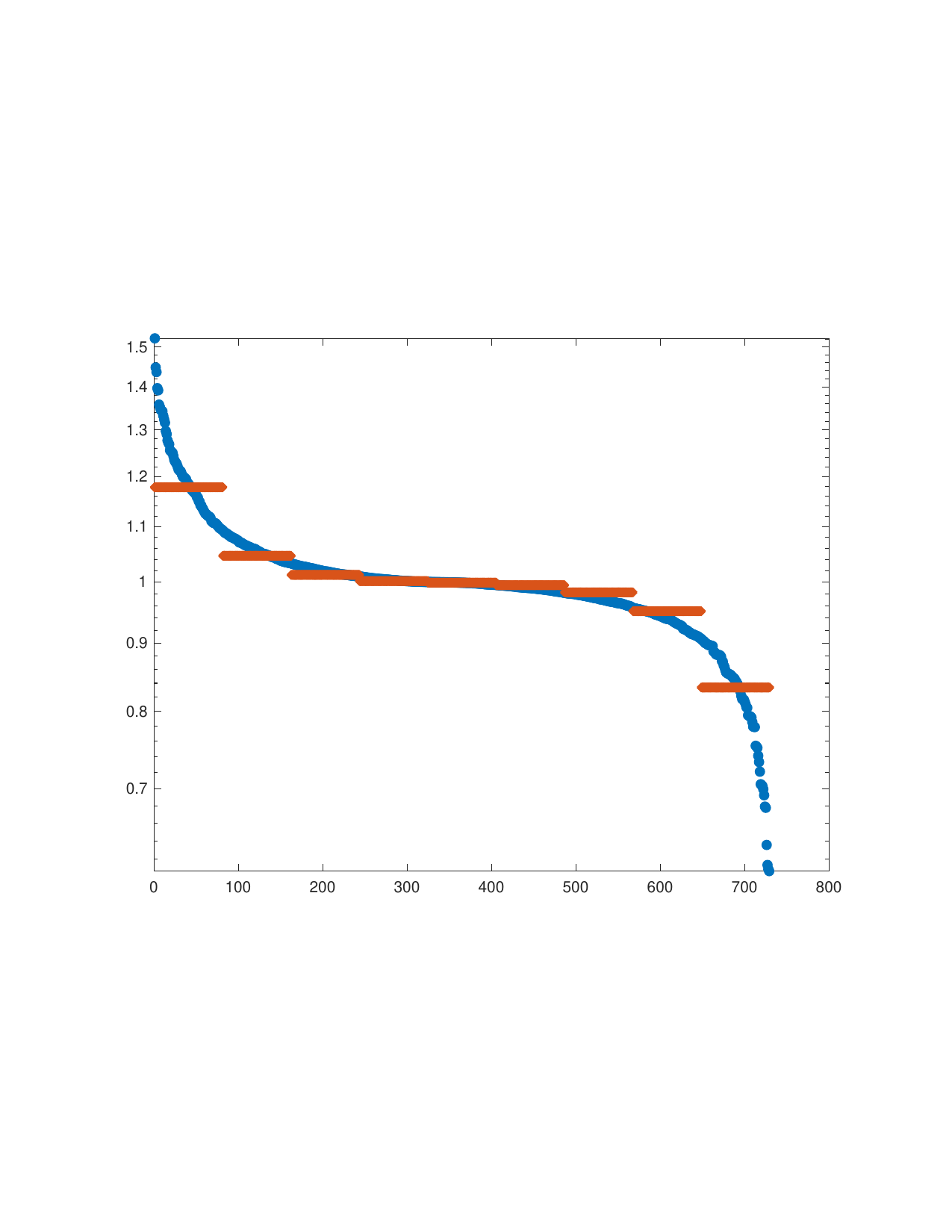}};
  \node[below=of img, node distance=0cm, yshift=4.2cm] {Index of eigenvalue};
  \node[left=of img, node distance=0cm, rotate=90, anchor=center,yshift=-1.15cm] {Magnitude};
 \end{tikzpicture}
\end{minipage}
\vspace{-2.8cm}
\caption{The averaged eigenvalues of $A=\kappa_4(X,X)$ (blue dots) together with the repeated, averaged eigenvalues of $B=\kappa_4(Y,Y)$ (red crosses).}\label{fig:secondex2}
\end{figure}

\begin{example}
\label{ex:thirdex}
Figure~\ref{fig:thirdex} shows what happens when the setup is kept exactly the same as in Example~\ref{ex:secondex}, except we set the kernel to be $\kappa_5:\mathbb{R}\times\mathbb{R}\to\mathbb{R}$ defined by $\kappa_5(x,y)=e^{-10(x-y)^2}$. Observe that there seems to be no correlation between the quantile bounds for the eigenvalues of $A$ and the averaged eigenvalues of $B$, which we may attribute to a lack of decay of $\kappa_5$ away from the diagonal as required by \eqref{eq:quickdecay}. (Note that $A$ has low numerical rank here.)
\begin{figure}[htbp]
\centering\vspace{-3.5cm}
\begin{minipage}{0.65\textwidth}
\begin{tikzpicture}
  \node (img)  {\includegraphics[width=4in]{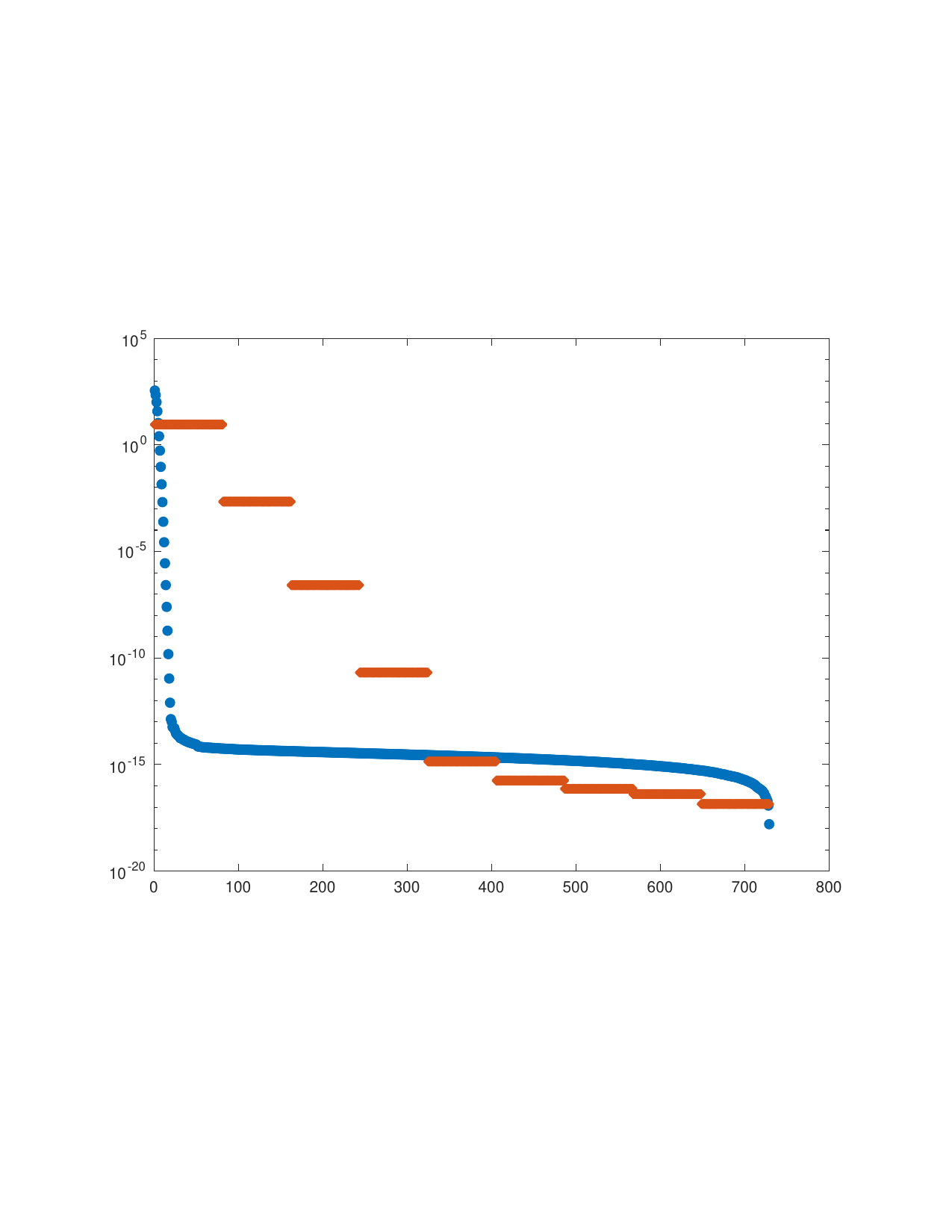}};
  \node[below=of img, node distance=0cm, yshift=4.2cm] {Index of eigenvalue};
  \node[left=of img, node distance=0cm, rotate=90, anchor=center,yshift=-1.15cm] {Magnitude};
 \end{tikzpicture}
\end{minipage}
\vspace{-2.8cm}
\caption{The averaged eigenvalues of $A=\kappa_5(X,X)$ (blue dots) together with the repeated, averaged eigenvalues of $B=\kappa_5(Y,Y)$ (red crosses).}\label{fig:thirdex}
\end{figure}
\end{example}

Here, we note that the kernel used does not have to have any particular form (i.e. we take $\kappa$ alternately to be the Cauchy kernel and the Gaussian kernel), as long as the steep decay away from the diagonal is maintained. For higher dimensions, Examples~\ref{ex:firstex} and \ref{ex:secondex} indicate that the length scale involved in $\kappa$ does not have to be quite as small in higher dimensions as in does in lower dimensions for fast decay to be satisfied. This corresponds to the well-known (but unintuitive) heuristic that unit balls in high dimension are ``concentrated near the axes." Finally, Example~\ref{ex:thirdex} illustrates the limitations of our framework; in a sense, it is a complement of the phenomenon that occurs in the bottom plot of Figure~\ref{fig:nystrom}.

\section{Conclusion and future work}
\label{sec:futurework}
We have introduced a new framework that aims to provide a way to approximate the eigenvalues of a kernel matrix $A$ evaluated at a set $X$ of $n$ points which come from standard uniform distributions on $\mathbb{R}^D$. In particular, after fixing $k$, our framework provides bounds in expectation on the $k$ spectrum quantiles of the kernel matrix $A$. Since we do not require access to $A$, this new framework allows us to find such bounds with computational cost that does not scale with $n$. In particular, it requires $O(mk^3)$ steps, where $m$ is the number of times we form $B$ and is a constant that appears to scale with $k$. However, our work includes a number of limitations that we aim to overcome in the future. We go over these limitations one by one, and mention possible directions to address them.

First, our work so far concerned only points which come from the uniform distribution on $\mathbb{R}^D$. However, we may extend these ideas to consider any compactly-supported, absolutely continuous distribution $\Omega$ by composing $\kappa$ with an appropriate coordinate transformation, which in turn may be obtained from the CDF of $\Omega$. In doing so, for our framework to apply, we must ensure that an analog of the condition of Equation~\eqref{eq:quickdecay} is adequately satisfied on this composition of functions, and we must account for this difference in the resulting analysis and design of the subsampling strategy. A future study of commonly-used distributions (for example, the multivariate normal distribution) will be useful in finding evidence for when this condition is satisfied.

Second, the distribution $E$ provided by Proposition~\ref{prop:distributionexistence} requires a lot of trials of forming, finding the eigenvalues of, and then averaging $B$ in order to get a good approximation for the quantiles of $A$. That is, $m$ may be high, if it does not depend on $n$. This is likely because the probabilities of some of the scalar multiples appear to be quite low in general. For example, in Example~\ref{ex:samplingdistribution}, we require each coordinate of $\mathbf{x}$ to be multiplied by $\delta=130.90$ with probability $d=1.4709\cdot10^{-6}$. Another disadvantage of $E$ from Proposition~\ref{prop:distributionexistence} is that precomputing the relevant values of $z$ and their probabilities is computationally expensive and becomes infeasible for large $k$. This distribution, however, is only one distribution that satisfies~\eqref{eq:specialdistrib}. We know from \cite{curtofialkow} that there is not even a unique discrete distribution satisfying Equation~\eqref{eq:specialdistrib}, for example if we assume that $E$ could take on more distinct values. Furthermore, there may be continuous distributions satisfying Equation~\eqref{eq:specialdistrib} that are easier to compute with for our purpose. We have only made the simplifying assumption that $E$ is discrete in order to demonstrate the theoretical feasibility of this approach. Thus, we would like to know if continuous distributions exist satisfying \eqref{eq:specialdistrib} that allow the quantile estimates to converge to their expectation with fewer trials than $E$ requires.

Finally, we may sidestep the graph-theoretic difficulties of matrix subsampling altogether by combining the quantile bound results contained in this work with moment information derived from the Stochastic Lanczos Quadrature approach of \cite{chen1,chen2}. This direction may allow the {\it a priori} determination of whether or not $A$ has high numerical rank, something that the framework we have outlined does not address.


In its present form, we propose applying this framework to the question of finding the intrinsic dimension of data. Namely, the {\it manifold hypothesis} in data science is that real-world data embedded in high-dimensional space, such as collections of 64-by-64-pixel images with certain properties embedded in the space of all 64-by-64-pixel images, actually reside on a lower-dimensional mathematical structure. The dimensionality of this structure is then called the intrinsic, or latent, dimension of the dataset. This idea, taken literally for the case of a $\mathcal{C}^2$-manifold, was tested in \cite{fefferman}. Taken literally for the case of an algebraic variety, it was tested in \cite{breiding}. Less literal but more practical mathematical formulations for intrinsic dimension have been explored starting with the case of linear subspace dimension \cite{fukunaga}. The most commonly used notions in practice are local, such as those based on statistics of the expected number of nearest neighbors at each datapoint \cite{levina,facco,pope}.

Here, we propose a new local formulation. Until now, we have not paid much attention to the role of $D$ used in the definition of the $\mathbf{y}_j$'s in Proposition~\ref{prop:momentmatch}. However, when the dimensionality of $X$ is unknown but the distribution of each point of $X$ is known to be approximately uniform, this constant may be used as a parameter. With these assumptions, only choosing $D$ correctly leads to obtaining good quantile bounds for a larger kernel matrix $A$ formed from $X$ versus a subsampled kernel matrix $B$. See, for example, Figure~\ref{fig:baddvalue} for the result of setting $D=2$ or $D=4$ instead of the correct value $D=3$ when forming $B$.

\begin{figure}[ptbh]
\centering
\begin{minipage}{0.6\textwidth}
\begin{tikzpicture}
  \node (img)  {\includegraphics[scale=0.57]{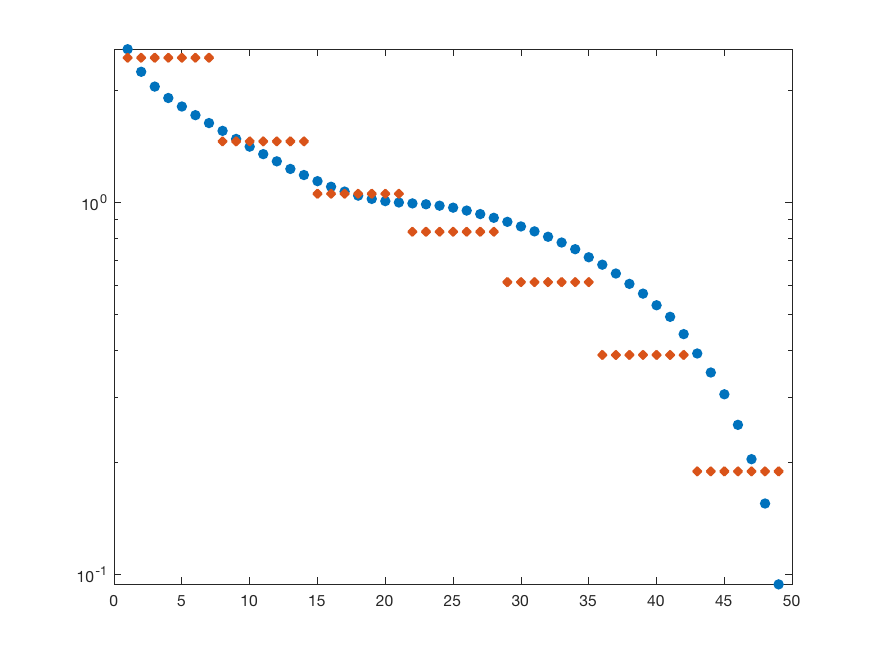}};
  \node[left=of img, node distance=0cm, rotate=90, anchor=center,yshift=-0.7cm] {Magnitude};
 \end{tikzpicture}
\end{minipage}

\begin{minipage}{0.6\textwidth}
\begin{tikzpicture}
  \node (img)  {\includegraphics[scale=0.57]{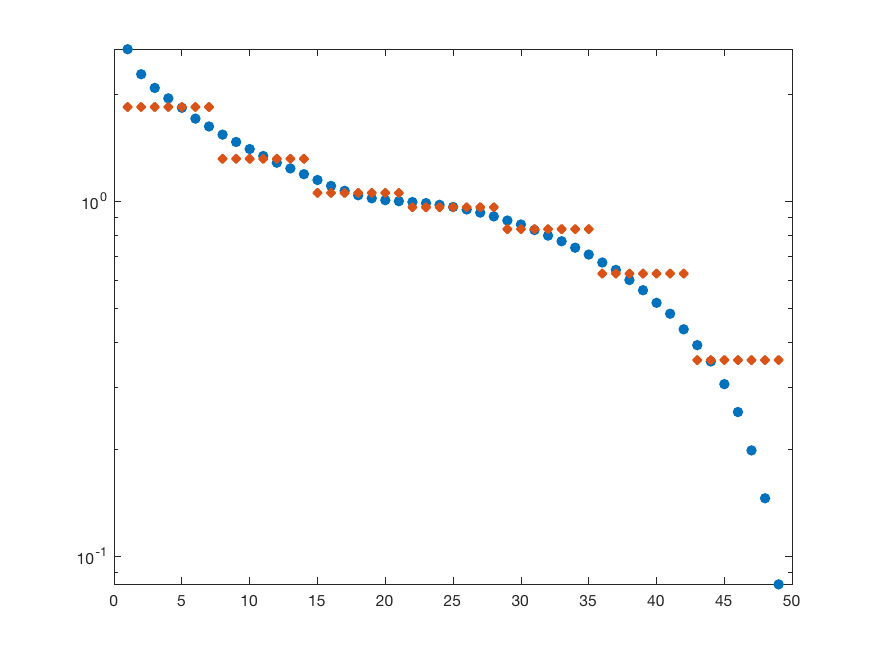}};
  \node[left=of img, node distance=0cm, rotate=90, anchor=center,yshift=-0.7cm] {Magnitude};
 \end{tikzpicture}
\end{minipage}

\begin{minipage}{0.6\textwidth}
\begin{tikzpicture}
  \node (img)  {\includegraphics[scale=0.57]{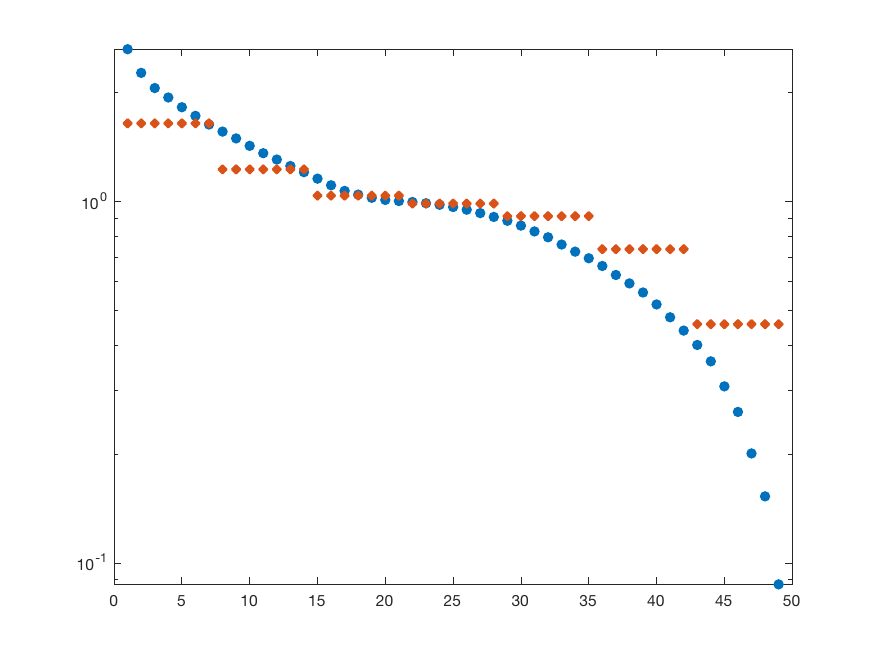}};
  \node[below=of img, node distance=0cm, yshift=1cm] {Index of eigenvalue};
  \node[left=of img, node distance=0cm, rotate=90, anchor=center,yshift=-0.7cm] {Magnitude};
 \end{tikzpicture}
\end{minipage}
\caption{Middle figure: the averaged eigenvalues of 100 trials of forming $A=\kappa_6(X,X)$ (blue dots) and 100000 trials of forming $B=\kappa_6(Y,Y)$ (red dots), where $Y$ is formed from $X$ as before and $\kappa_6:\mathbb{R}^3\times\mathbb{R}^3\to\mathbb{R}$ is defined by $\kappa_6(\mathbf{x},\mathbf{y})=e^{-30|\mathbf{x}-\mathbf{y}|^2}$. Top and bottom figures: we do the same except we (incorrectly) set $D=2$ and $D=4$, respectively, when forming $Y$. Since these are the wrong values of $D$, we get worse quantile estimates in the top and bottom figures: in the top and bottom estimates, 6 and 4, respectively, out of 47 eigenvalues are incorrectly estimated.}\label{fig:baddvalue}
\end{figure}

Therefore, if we start with the collection of points $X$ and restrict it to a small cube $V$ in $\mathbb{R}^D$ where the points are approximately uniformly distributed we can use our eigenvalue quantile estimation technique to see if we get accurate bounds after setting $D$ to several candidate values. That is, we could sample e.g. $n=49$ and $k=7$ points from $V$ and observe which value of $D$ works best to give quantile estimates. In doing so, we would be assuming that our points are ``locally uniformly" distributed (i.e. almost uniform on an appropriate, small-enough ``chart" contained in $V$), and that the embedding generating $X$ restricted to $V$ guarantees that $\kappa(\mathbf{x},\mathbf{y})$ is far from 0 only for points $\mathbf{x}$ and $\mathbf{y}$ that are close within the latent space. In making these assumptions, this setup effectively provides another local notion of intrinsic dimension. Because of its locality, this notion of dimension may be related to the existing $k$-nearest-neighbor-type estimators.

\section*{Acknowledgements}The author sincerely thanks Levon Nurbekyan for his support and ideas related to the theoretical aspects of this work, as well as Yuanzhe Xi and Shifan Zhao for the overall direction of the work and several key observations motivating it.

\clearpage

\end{document}